\newcommand{\F}{{\mathcal{F}}}
\newcommand{\e}{{\epsilon}}
\newcommand{\q}{{\theta}}
\DeclareMathOperator*{\E}{\mathbb{E}}
\newcommand{\p}{{\mathbf{p}}}
\newtheorem{definition}{Definition}
\newtheorem{theorem}{Theorem}
\newtheorem{lemma}{Lemma}
\newtheorem{corollary}{Corollary}
\newcommand{\Q}{{\mathcal{O}}}
\let\Pr\relax
\DeclareMathOperator*{\Pr}{\mathbb{P}}
 \newenvironment{itemize*}%
  {\vspace{-2ex} \begin{itemize} %
     \setlength{\itemsep}{-1ex} \setlength{\parsep}{0pt}}%
  {\end{itemize}}
\icmltitlerunning{Robust Influence Maximization for Hyperparametric Models}
\begin{document}

\twocolumn[
\icmltitle{Robust Influence Maximization for Hyperparametric Models}
%\icmltitle{Appendix for Robust Influence Maximization for Hyperparameteric Models}

% It is OKAY to include author information, even for blind
% submissions: the style file will automatically remove it for you
% unless you've provided the [accepted] option to the icml2019
% package.

% List of affiliations: The first argument should be a (short)
% identifier you will use later to specify author affiliations
% Academic affiliations should list Department, University, City, Region, Country
% Industry affiliations should list Company, City, Region, Country

% You can specify symbols, otherwise they are numbered in order.
% Ideally, you should not use this facility. Affiliations will be numbered
% in order of appearance and this is the preferred way.

\begin{icmlauthorlist}
\icmlauthor{Dimitris Kalimeris}{goo}
\icmlauthor{Gal Kaplun}{goo}
\icmlauthor{Yaron Singer}{goo}
\end{icmlauthorlist}

\icmlaffiliation{goo}{Department of Computer Science, Harvard University, Cambridge, MA, USA}
%\icmlaffiliation{ed}{School of Computation, University of Edenborrow, Edenborrow, United Kingdom}

\icmlcorrespondingauthor{Dimitris Kalimeris}{kalimeris@g.harvard.edu}
\icmlcorrespondingauthor{Gal Kaplun}{galkaplun@g.harvard.edu}

% You may provide any keywords that you
% find helpful for describing your paper; these are used to populate
% the "keywords" metadata in the PDF but will not be shown in the document
\icmlkeywords{Machine Learning, ICML}

\vskip 0.3in
]

%\printAffiliationsAndNotice{} 

% this must go after the closing bracket ] following \twocolumn[ ...

% This command actually creates the footnote in the first column
% listing the affiliations and the copyright notice.
% The command takes one argument, which is text to display at the start of the footnote.
% The \icmlEqualContribution command is standard text for equal contribution.
% Remove it (just {}) if you do not need this facility. 

\printAffiliationsAndNotice{}  % leave blank if no need to mention equal contribution
%\printAffiliationsAndNotice{\icmlEqualContribution} % otherwise use the standard text.

\begin{abstract}
In this paper we study the problem of robust influence maximization in the independent cascade model under a hyperparametric assumption. In social networks users influence and are influenced by individuals with similar characteristics and as such they are associated with some features. A recent surging research direction in influence maximization focuses on the case where the edge probabilities on the graph are not arbitrary but are generated as a function of the features of the users and a global hyperparameter. We propose a model where the objective is to maximize the worst-case number of influenced users for any possible value of that hyperparameter. We provide theoretical results showing that proper robust solution in our model is NP-hard and an algorithm that achieves improper robust optimization. We make-use of sampling based techniques and of the renowned multiplicative weight updates algorithm. Additionally we validate our method empirically and prove that it outperforms the state-of-the-art robust influence maximization techniques.
\end{abstract}

\section{Introduction}\label{sec:intro}
In this paper we study robust influence maximization for hyperparametric diffusion models.  
First studied by Domingos and Richardson~\cite{domingos2001mining} and later elegantly formulated in seminal work by Kempe, Kleinberg, and Tardos~\cite{KKT03}, influence maximization is the algorithmic task of selecting a small set of individuals who can effectively spread information in a network.  The problem we formulate and address in this paper pertains to influence maximization for cases in which the information spread model in the network is subject to some uncertainty.

The most well studied model for information spread is the celebrated Independent Cascade (IC) model.  In this model the social network is modeled by a graph and every pair of nodes $u,v$ that are connected with an edge $e=(u,v) \in E$ are associated with a probability $p_{e}$ that quantifies the probability of $u$ spreading information to $v$.  Information spread in this model stochastically progresses from a set of nodes that initiates information to the rest of the nodes in the network as dictated by the graph topology and probabilities encoded on the edges.  Influence maximization is then the algorithmic task of selecting a fixed set of individuals that maximize the expected number of nodes that receive information.  More formally, given a graph $G=(V,E,\mathbf{p})$ where $V$ is the set of nodes, $E$ is the set of edges, and $\mathbf{p} \in [0,1]^{|E|}$ is the vector of edge probabilities, and a parameter $k\leq |V|$, influence maximization is the optimization problem:
$$\max_{S:|S|\leq k}f_{\mathbf{p}}(S)$$
where $f_{\mathbf{p}}(S)$ is the expected number of nodes in the network that receive information when $S$ is the initial set of nodes that spreads information.

In their seminal work, Kempe, Kleinberg, and Tardos proved that when $\mathbf{p}$ is known influence maximization can be reduced to monotone submodular maximization under a cardinality constraint.  Consequently, a simple greedy algorithm that iteratively selects the node whose marginal influence is approximately maximal obtains a solution that is arbitrarily close to a $1-1/e$ factor of optimal~\cite{NWF78}.

%\tdcomment{Add motivation about the model, applications and intuition of the hyperparametric model}

%\tdcomment{Add figure that we didn't have time before and correct the existing one}

%\tdcomment{Add references and maybe related work section?}

%\tdcomment{Robust optimization in the hyperparametric setting is very useful. You can think of each hyperparameter as being associated with some specific product. Then we want to find a set of users that are influential even for the worst-case product.}

\subsection{Influence maximization under model uncertainty}
In recent years there has been a growing concern regarding the sensitivity of influence maximization to model uncertainty~\cite{GBL11, AKMKV13}.  In particular, small perturbations or uncertainty regarding the probability vector can have dramatic effects on the quality of a solution (see example illustrated in Figure~\ref{fig:uncertainty}), and even approximating the stability of an instance to small perturbations is  intractable~\cite{HK15}.

\subsection{Influence maximization under different models}

The sensitivity to small errors that we mentioned in the previous subsection is one issue. The second issue that one needs to consider is that we might actually have several models we want to optimize over. For example consider the case where a clothing company wants to advertise shirts and sweaters. The probabilities in the graph will be slightly different but we expect that the set of influential nodes will be more or less the same. Hence, it makes sense to try to identify a set of nodes that are influential, for all the underlying products, i.e. robust for the different models that we care (where each model induces different probabilities).

\begin{figure}
	\center
	\includegraphics[width=2.4in]{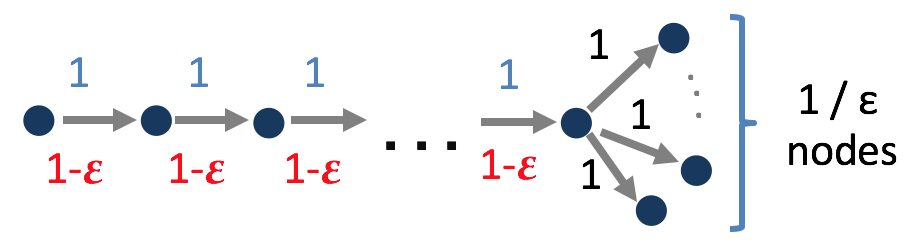}
    \caption{\footnotesize{This graph indicates the sensitivity of the influence maximization task to minor differences in the diffusion probabilities and hence the necessity of robust optimization. If all the diffusion probabilities are 1, selecting the node in the beginning of the chain maximizes the influence. However, if the edge probabilities are $1-\e$ it will be optimal to select the node in the end of the chain. %This example indicates the necessity of robust optimization.
    %Additionally, depending on the value of $\e$ the optimal robust solution might need to be randomized.
    }}
        \label{fig:uncertainty}
\vspace{-0.3cm}
\end{figure}

\paragraph{Robust influence maximization.} To account for model uncertainty there has recently been a growing body of literature on \emph{robust} influence maximization~\cite{kempe,wei,nips17,OY17,nika} where the goal is to find a set of nodes whose influence is maximal over an entire set $\mathcal{P}$ of models:  
$$\arg\max_{S:|S|\leq k}\min_{\mathbf{p}\in \mathcal{P}} f_{\mathbf{p}}(S) $$
%

%\paragraph{Discrete spaces do not capture the problem} Given a network with unknown probabilities, inferring the exact influence is hard. At large, statistical and machine learning models that try to estimate a parameter in a continuous space, provide confidence intervals for the probabilities. Those confidence intervals are intrinsically continuous. Finding a discrete space that captures those confidence intervals in a meaningful way is no easy task as trying to discretize those intervals will run into various problems as discussed in the following paragraph. Moreover, the lack of structure over the functions, will lead to ??? 
%
%how did you find these functions? in this paper we present a way to find them
%
%hardness result, they are difficult to deal with because you may have exponential number of functions, because if you allow for independence you are screwed because many things
%
%Also, in many cases there are confidence intervals that are constructed through a learning process, e.g. \cite{wei, more1, more2} that can make the optimization space continuous.

For a general set of models $\mathcal{P}$ it is easy to see that the robust influence maximization problem is either trivial or intractable.  Specifically, if we have confidence intervals for the diffusion probabilities, i.e. $p_e \in [c^{-}_e, c^{+}_e]$ for all edges $e$ then $\max_S\min_{\p}f_\p(S)$ simplifies into $\max_Sf_{\p^{-}}(S)$ where $p^{-} = (c^{-}_e)_{e\in E}$ due to the monotonicity of $f_{\mathbf{p}}$ in $\p$.  In general $\mathcal{P}$ can be exponentially large hence the problem is intractable.  Natural approaches like discretization, sampling, or maximin optimization over the influence function will fail to work for two computational reasons: 1) The space is of exponential dimension in $|E|$ and 2) The influence function is highly non concave-convex, a form of functions that is amenable to max-min optimization.  To circumvent these difficulties, previous work on robust influence maximization have taken two different approaches.  The first approach solves the max-min objective but assuming that the number of models is polynomial in the size of the problem (e.g.~\cite{nips17,nika}). The second focuses on the \emph{robust ratio} $\rho(S) :=\min\limits_{\mathbf{p}\in\mathcal{P}} f_\mathbf{p}(S)/f_{\mathbf{p}}(S_{\mathbf{p}}^\star)$ where $S_{\mathbf{p}}^\star$ denotes the optimal solution for $f_\mathbf{p}$~\cite{kempe, wei} which is a natural direction that comes with the caveat of not optimizing for the total number of nodes that are influenced.%, arguably a more intuitive goal (in Figure~\ref{fig:ratio} we show a simple example that illustrates the difference in performance between ratio-based  and absolute number optimization).

However, optimizing for the robust ratio is not the right objective to consider and it can be up to a factor of $\sqrt{n}$ worse than the real robust solution as proved in the following lemma (proof in Appendix~\ref{app:proofs}) and illustrated in Figure~\ref{fig:ratio}.

%\textbf{Add a lemma:} However, optimizing for the robust ratio can lead to significantly worse solutions, where the value of the solution could be even up to a factor of $\sqrt{n}$ worse than what we could have gotten as stated in the following lemma and in Figure~\ref{fig:ratio}! Notice that by definition $\min_if_i(S_v) \geq \min_if_i(S_r)$ always.

\begin{lemma}\label{lemma:ratio_tight}
Let $\mathcal{P}$ be a set of influence functions. Consider the solutions to the  objectives: $\hat{S}_r = \arg\max_{S:|S| \leq k}\rho(S)$ and $\hat{S}_v = \arg\max_{S:|S| \leq k}\min_{\p \in \mathcal{P}}f_{\p}(S)$. There exists a set of influence functions $\mathcal{P}$ for which $\min_{\p}f_{\p}(\hat{S}_r) = \frac{1}{\sqrt{n}}\min_{\p} f_{\p}(\hat{S}_v)$, and this approximation ratio is tight.
\end{lemma}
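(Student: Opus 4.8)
The plan is to prove the two halves of the statement separately: first exhibit an instance on which the robust‑ratio selection is a factor $\sqrt{n}$ worse in worst‑case value than the robust‑value selection, and then show that no instance can be worse than this, so that the factor $\sqrt{n}$ is tight. For the construction I would take the cardinality constraint $k=1$ and two models $\p_1,\p_2$ on a common graph with $n$ nodes ($n$ a perfect square), using deterministic $0/1$ edge probabilities so that each $f_{\p_i}(S)$ simply counts the nodes reachable from $S$. I would build two gadgets: a \emph{balanced} node $c$ that reaches exactly $\sqrt{n}$ nodes under both models, and a \emph{specialist} node $b$ that reaches only itself under $\p_1$ but reaches all $n$ nodes under $\p_2$; the remaining edges are arranged so that $f_{\p_1}(S_{\p_1}^\star)=\sqrt{n}$ (witnessed by $c$) and $f_{\p_2}(S_{\p_2}^\star)=n$ (witnessed by $b$). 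A direct computation then gives $\rho(c)=\min(1,1/\sqrt{n})=1/\sqrt{n}$ and $\rho(b)=\min(1/\sqrt{n},1)=1/\sqrt{n}$, while $\min_{\p}f_{\p}(c)=\sqrt{n}$ and $\min_{\p}f_{\p}(b)=1$. Hence $\hat{S}_v=c$ has worst‑case value $\sqrt{n}$, whereas among the ratio‑optimal sets (which include $b$) one has worst‑case value $1$, giving exactly the claimed factor $1/\sqrt{n}$. I would note that the extremal instance is necessarily a tie in $\rho$, since forcing $b$ to be \emph{strictly} preferred would require $\max_{\p}f_{\p}(S_{\p}^\star)>n$, which is impossible; the equality is therefore attained under the natural tie‑breaking, or as the limit of a family in which $b$'s $\p_1$‑reach is $1+\delta$ with $\delta\to 0$.

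For the tightness direction I would show that for \emph{every} $\mathcal{P}$ one has $\min_{\p}f_{\p}(\hat{S}_r)\ge \frac{1}{\sqrt{n}}\min_{\p}f_{\p}(\hat{S}_v)$. Writing $v(S):=\min_{\p}f_{\p}(S)$, the key chain is as follows. Let $\p^\star=\arg\min_{\p}f_{\p}(\hat{S}_r)$, so that $v(\hat{S}_r)=f_{\p^\star}(\hat{S}_r)\ge \rho(\hat{S}_r)\,f_{\p^\star}(S_{\p^\star}^\star)\ge \rho(\hat{S}_v)\,f_{\p^\star}(S_{\p^\star}^\star)$, using the definition of $\rho$ and the optimality of $\hat{S}_r$ for the ratio objective. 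Since $f_{\p}(\hat{S}_v)\ge v(\hat{S}_v)$ and $f_{\p}(S_{\p}^\star)\le n$ for all $\p$, we get $\rho(\hat{S}_v)\ge v(\hat{S}_v)/n$; and since $f_{\p^\star}(S_{\p^\star}^\star)\ge f_{\p^\star}(\hat{S}_v)\ge v(\hat{S}_v)$, we obtain $v(\hat{S}_r)\ge v(\hat{S}_v)^2/n$. Combining this with the elementary bound $v(\hat{S}_r)\ge 1$ (a seed always influences itself) yields
$$\frac{v(\hat{S}_v)}{v(\hat{S}_r)}\le \min\Big(v(\hat{S}_v),\ \frac{n}{v(\hat{S}_v)}\Big)\le \sqrt{n},$$
where the final step is the geometric‑mean inequality, with the maximum attained at $v(\hat{S}_v)=\sqrt{n}$. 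This matches the construction and closes the argument.

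The main obstacle is the tightness direction, and specifically the realization that a single inequality will not suffice: the ``squared'' bound $v(\hat{S}_r)\ge v(\hat{S}_v)^2/n$ is only strong when $v(\hat{S}_v)$ is large, so it must be paired with the trivial bound $v(\hat{S}_r)\ge 1$, which is only strong when $v(\hat{S}_v)$ is small, and the two are balanced by a geometric‑mean step precisely at $v(\hat{S}_v)=\sqrt{n}$. A secondary subtlety is that the extremal construction sits exactly on the boundary of the tightness bound, so exact equality relies on tie‑breaking (or a limiting family) rather than on a strict ratio‑maximizer. The remaining ingredients — monotonicity of $f_{\p}$, that seeds influence themselves, and that influence is at most $n$ — are routine and are what make the chain of inequalities go through.
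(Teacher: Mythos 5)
Your proposal is correct and takes essentially the same route as the paper: your lower-bound instance is exactly the paper's Figure~2 construction (a balanced node worth $\sqrt{n}$ under both models versus a specialist worth $1$ and $n$), and your tightness argument uses the same ingredients --- ratio-optimality of $\hat{S}_r$, optimality of the per-model optima $S^\star_{\p}$, and the bounds $1 \leq f_{\p}(S) \leq n$ --- merely organized as the direct estimate $\min_{\p}f_{\p}(\hat{S}_r) \geq \max\left(1, \tfrac{1}{n}\left(\min_{\p}f_{\p}(\hat{S}_v)\right)^2\right)$ rather than the paper's proof by contradiction. Your explicit note that exact equality forces a tie in $\rho$ (and otherwise only holds in the limit) is a legitimate subtlety the paper glosses over, as its appendix only asserts the $\Omega\left(1/\sqrt{n}\right)$ form.
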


%\begin{figure*}[h]
%\begin{center}
%   \includegraphics[width=3.45in]{figures/ratio_opt} \\
%\end{center}
%  \caption{Optimizing for the robust ratio can result to \emph{a factor of} $\sqrt{n}$ less influenced notes in comparison to optimizing for the total number of influenced nodes. The first number in the tuple is the value of the robust ratio of the respective node in the respective graph and the second number is the actual influence of the node. It is easy to see that node $v$ is the optimal solution in the first graph and node $u$ in the second.}
%  \label{fig:no_noise}
%\end{figure*}

\begin{figure}
	\center
   \includegraphics[width=3.in]{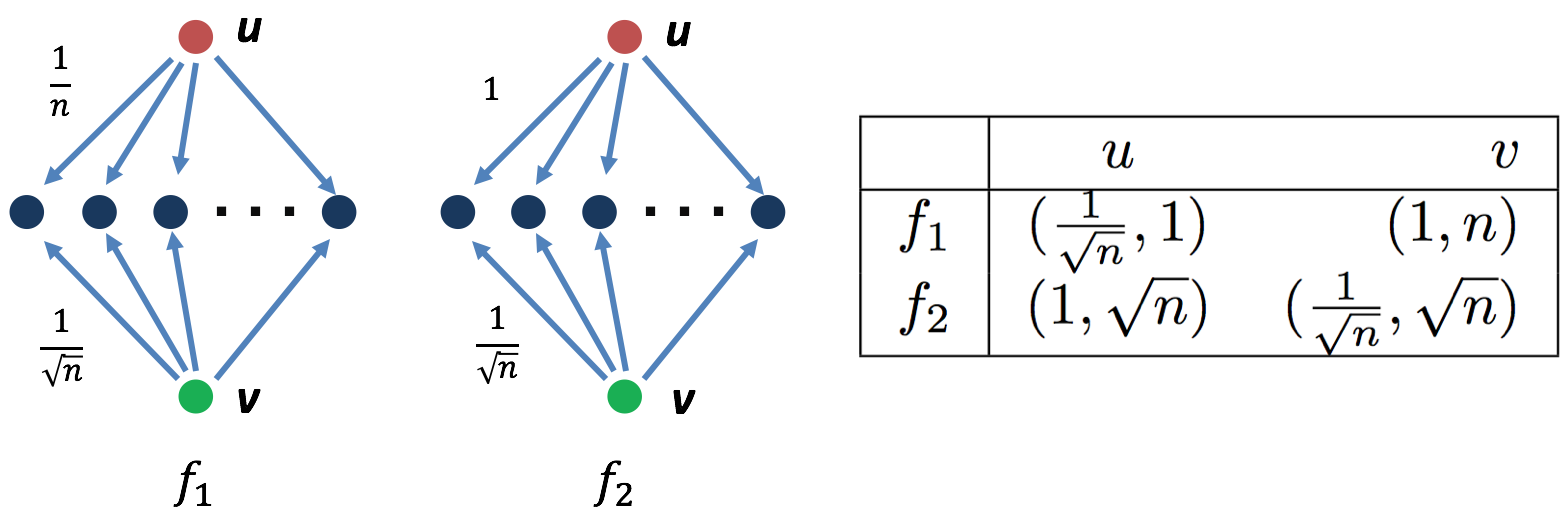} \\
   \caption{\footnotesize{Optimizing for the robust ratio may cost a \emph{multiplicative factor} of $\sqrt{n}$ in the worst case. Here, $n$ is the number of blue nodes and we consider  $k=1$. In $f_1$, the optimal solution is node $v$ while in $f_2$ it is $u$. The table shows the robust ratios and the expected number of influenced nodes for nodes $u$ and $v$ in the form $(ratio, \, expected)$. The ratio objective will choose node $u$, influencing only $1$ node in expectation. However, the direct approach will select node $v$ for $\sqrt{n}$ influenced nodes in expectation. \emph{Maximizing the robust ratio is not the right objective to consider.}}}
  \label{fig:ratio}
\vspace{-0.3cm}
\end{figure}

\paragraph{Robust optimization in hyperparameteric models.} A recent line of work in influence maximization and learning in networks explores the interaction of correlations of edge probabilities with the influence spread~\cite{bandits15,bandits17,reviewer1,icml18}. Specifically, it restricts the hypothesis class of IC by imposing correlations on the way that the probabilities are created. It assumes that each node in the network is associated with some features encoding information about it, for example social (age, etc.) or graph-related (degree, pagerank, etc.) characteristics. The influence probability between two nodes is a function of their features and a global \emph{low-dimensional} hyperparameter $\q$. Such approaches have been shown to have sample complexity that only depends on the dimension $d$ and importantly have been shown to be highly predictive on real information spread data collected from Facebook even with hyperparametric models parameterized by small number of dimensions~\cite{icml18}.  Intuitively, instead of searching individual probabilities we need to search for a hyperparameter in a much smaller space and once we find the right one it pins down all the influence probabilities.

Low-dimensional hyperparametric models circumvent the hardness associated with continuous spaces, as they impose structure and the complexity of the influence model is largely determined by the dimension $d$.  The main question we address in this paper can be informally stated as follows:
\begin{center}
\emph{Is there a computationally efficient algorithm to perform robust optimization for hyperparametric models?}
\end{center}

\subsection{Main result}
In this paper
%we study robust influence maximization subject to uncertainty when the influence model can be well approximated by a hyperparametric model.  W
we show that in contrast to general influence models, the hyperparamteric approach enables tractable solutions to the robust influence maximization problem.  At a high level, we show that by a simple sampling procedure one can find an \emph{efficient reduction} from continuous to discrete robust influence maximization that allows us to approximate the value $\max\min f$ instead of the ratio.

%Moreover, the use of a hyperparametric model increases the motivation and the interpretability of our model in many real world applications. When you want to advertise many different products and you want to select a single set of users that will guarantee the maximum possible diffusion even in the least popular product. The users have fixed features, the features of the product are encoded in $\q$, try to find the best one.

\subsection{Paper Organization} 
We begin by formalizing the robust influence maximization problem and the hyperparametric model in Section~\ref{sec:prelim}. In Section~\ref{sec:alg} we describe the main technical result of the paper which allows reducing the continuous robust optimization problem to a discrete problem.  In Section~\ref{sec:MWU} we use this reduction and introduce the Hyperparametric Influence Robust Optimizer (HIRO) algorithm for robust influence maximization. In Section~\ref{sec:lb} we provide a strong hardness result that shows the NP-hardness of robust optimization, even in the sense of bi-criteria approximation. Finally we evaluate the empirical performance of our algorithm in Section~\ref{sec:experiments}.

%\newpage
\section{Preliminaries}
\label{sec:prelim}

A social network is modeled by a graph $G = (V, E)$ where $V$ is the set of individuals in the network and $E$ represents the friendships between them. We use $n$ and $m$ to denote $|V|$ and $|E|$ respectively. One of the core models for diffusion, the process through which information flows between the nodes of $G$, is the Independent Cascade model which was popularized in the seminal work of ~\cite{KKT03}.

\textbf{The Independent Cascade (IC) model.} The IC model describes a discrete-step stochastic process through which diffusion spreads from a set of initially active individuals to the rest of the nodes in the network. Each node can be active or inactive and each edge $e\in E$ in the network is associated with some probability $p_e$.  All nodes begin as inactive and at time step $t=0$ a subset of nodes $S \subseteq V$, called the seed set, is chosen and becomes active. At every time step $t + 1$, every node $u$ that became active at time step $t$ attempts to influence every of its non-active neighbors $v$, independently and succeeds with probability $p_{(u,v)}$. %A node that got activated during the influence process never goes back to inactive.

\textbf{Influence functions.} %For a given hyperparametric model $h$ and a given value $\theta \in \Theta$, 
For a given graph $G=(V,E)$ and vector of probabilities $\mathbf{p} \in [0,1]^{m}$ the \emph{influence function} $f_{\mathbf{p}}:2^V\to \mathbb{R}$ measures the expected number of nodes that will become influenced in the graph $G$ for a seed set $S \subseteq V$:
$$f_{\mathbf{p}}(S) = \sum_{A\subseteq E}r_{A}(S)\prod_{e \in A}p_e\prod_{e \notin A}(1-p_e)$$
where $r_A(S)$ denotes the number of nodes that are reachable in $G$ from $S$ using only edges from $A$. An important property of $f_{\mathbf{p}}$ is that it is \emph{monotone submodular} for any $\mathbf{p}$.  

\paragraph{Influence maximization.} For a given influence function $f_{\mathbf{p}}:2^{[n]} \to \mathbb{R}$ and value $k\leq n$, influence maximization is the optimization problem: $\max_{S:|S|\leq k}f_{\mathbf{p}}(S)$.  The problem is NP-hard but since the influence function is monotone and submodular~\cite{KKT03} a simple greedy algorithm which iteratively selects nodes whose marginal contribution is largest obtains a solution that is a $1-1/e$ factor of the optimal solution~\cite{NWF78} and this approximation ratio is optimal unless P=NP~\cite{Feige98}.%, which is optimal assuming $P \neq NP$

%\textbf{Submodular Functions.} A set function is called submodular if it has the diminishing returns property, i.e. for all sets $S \subseteq T$ and elements $\alpha \notin T$ it holds: $f_S(\alpha) \geq f_T(\alpha)$ where $f_S(a) = f(S\cup \{a\}) - f(S)$. A set function is called monotone if $S \subseteq T$ implies that $f(S) \leq f(T)$. Monotone submodular functions can be optimized up to a factor of $(1-1/e)$ using the Greedy algorithm (see \cite{kempe, greedy}).%, which is optimal assuming $P \neq NP$.

%are $NP$-hard to optimize but they can be approximated to a factor of $(1 - 1/e)$ using the Greedy algorithm (see \cite{kempe, greedy}). %In the case of influence maximization this obstacle can be bypassed using sampling and an approximation ration arbitrarily close to $(1 - 1/e)$ can be achieved (see \cite{kempe, greedy}).

\textbf{Robust influence maximization.} %Let $\mathcal{F}$ be a family of functions parametrized by a parameter $\theta$, i.e. $\mathcal{F} = \{f_\theta:\mathcal{X}\mapsto \mathbb{R}\,|\, \theta \in \Theta\}$, for some domain $\mathcal{X}$. 
Given a graph $G=(V,E)$ and a set of probability vectors $\mathcal{P}$ the goal of \emph{robust} influence maximization is to find a solution of size $k$ that has high value for every possible influence function $f_\mathbf{p}$ that can be generated by $\mathbf{p} \in \mathcal{P}$:
\vspace{-.1cm}
$$\max_{S:|S|\leq k}\min_{\mathbf{p} \in \mathcal{P}}f_{\mathbf{p}}(S)$$
%\vspace{-.1cm}
%
There are two sources of inapproximability known for this problem: first, since it a generalization of influence maximization it is NP-hard to get any approximation better than $1-1/e$ and we are therefore satisfied with solutions that are \emph{approximately} optimal.  The second source of inapproximability is due to the fact that our solution space $\{S:|S|\leq k\}$ is highly non-convex which makes it intractable to obtain \emph{proper} solutions (Figure~\ref{fig:improper}). In particular, it is NP-hard to find a set of size $k$ that obtains any approximation better than $\mathcal{O}(\log (n))$ for the robust optimization problem~\cite{nips17}.  For this reason, we seek \emph{bi-criteria approximations}.  A solution $\hat{S}$ is an $(\alpha,\beta)$ bi-criteria approximation to the max-min solution of size $k$ if $\beta |\hat{S}| \leq k$ and:
$$ \min_{\mathbf{p} \in \mathcal{P}}f_{\mathbf{p}}(\hat{S}) \geq \alpha \max_{S:|S|\leq k}\min_{\mathbf{p} \in \mathcal{P}}f_{\mathbf{p}}(S)$$
Due to its sources of hardness the gold standard for robust influence maximization are $\left (1-1/e,\Omega  (\log^{-1}(n) ) \right)$ bicreteria approximations~\cite{nips17, krause, kempe, nika}.  
%When the solution $\hat{x}$ recovered belongs in $\mathcal{X}$ then the optimization is said to be proper, while when the solution does not belong in $\mathcal{X}$, for example it is a distribution over points of $\mathcal{X}$, then it is said to be improper.
% for the case where the family $\mathcal{F}$ contains finitely many functions, i.e. $\Theta$ is finite (THAT'S NOT NECESSARY)

%\begin{definition}[$\alpha$-approximate Bayesian Oracle]
%Assume that we are given a distribution $\D$ over the family $\mathcal{F}$, or equivalently a distribution over the parameter space $\Theta$. An approximate $\alpha$-Bayesian Oracle computes $\hat{x} \in \mathcal{X}$ such that:
%$$\E_{\theta\sim\D}[f_\theta(\hat{x})] \geq \alpha \min_{x \in \X}\E_{\theta \sim \D}[f_\theta(x)].$$
%\end{definition}
%\vspace{-.2cm}

\textbf{Hyperparametric influence models.} 
% The traditional IC model makes no assumption about how the probabilities are assigned on the edges of the graph. 
A hyperparametric model $H:\Theta\times X \to [0,1]$ restricts the traditional IC model by imposing correlations between the probabilities of different edges.  Each edge $e = (u,v)$ is associated with a $d$-dimensional feature vector $x_e \in X \subseteq [-1,1]^d$ encoding information about its endpoints. The probability of $u$ influencing $v$ is a function of the features $x_{(u,v)}$, and a global hyperparameter $\theta \in \Theta \subseteq [-B,B]^d$, for some constant $B >0$. That is: $p_e = H(\theta, x_e)$.  %As shown in~\cite{} a very small set of features suffices to predict the influence probabilities on experiments with real Facebook data treat $d$ \emph{as constant}, an assumption that has been verified in real data \cite{}, hence a small set of features is enough to approximately capture the diffusion process.
The most standard hyperparametric models are Generalized Linear Models (GLM) for which:
\vspace{-.1cm}
$$H(\theta,x_{e}) = h(\theta^\top x_{e}) + \xi_e$$
%\vspace{-.1cm}
 %
 where $\xi_e$ is drawn from some bounded distribution.  To ease the presentation, throughout this paper we treat the model as if $\xi_e = 0$\footnote{We note that the results carry over when $\xi_e$ is drawn  from a distribution with mean 0 and bounded support. %(not necessarily i.i.d) from a distribution with bounded support.%, and does not necessarily need to have mean 0.
 }.  Our results hold for a family of generalized linear models that we define later which includes standard choices for $h$ are linear, sigmoid, or the logarithm functions.

\begin{figure}
	\center
	\includegraphics[width=1.5in]{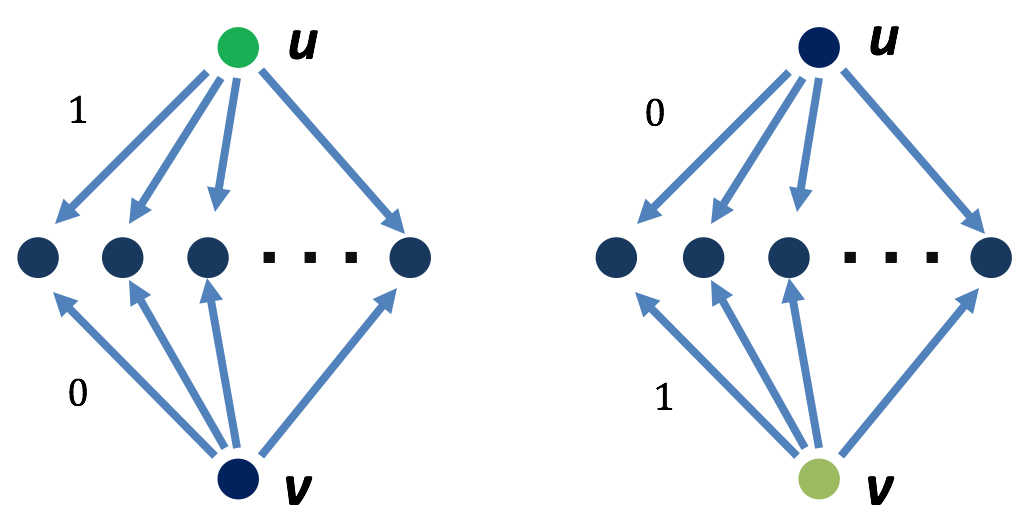}
    \caption{The green node denotes the optimal solution in each case. Any deterministic solution has robust influence of 1 (only the selected node). However, the distribution selecting uniformly from $\{u,v\}$ (improper solution) has expected influence of $n/2$.}
        \label{fig:improper}
\vspace{-0.3cm}
\end{figure}

\textbf{Hyperparameter Influence Robust Optimization.} %The goal of the paper is to perform robust optimization on influence functions produced by a hyperparametric model. 
For a given hyperparametric model $H$, set of features $\{x_{e}\}_{e\in E}$ and hyperparameter space $\Theta \subseteq [-B,B]^d$ the set of possible diffusion probabilities is:
$$\mathcal{H} := \{      \{ H( \theta,x_e) \}_{e\in E} \ : \ \theta \in \Theta    \}$$
and robust influence maximization then reduces to:
%, i.e. $\mathcal{F} = \{f_\theta:2^V\mapsto \mathbb{R}\,|\, \theta \in \Theta, p_e = h(\theta^\top x_e)\, \forall e\in E\}$. In other words, we want a set of $k$ nodes $S\subseteq V$ that is as close as possible to the value $\max_{S:|S|\leq k}\min_{\theta\in \Theta}f_\q(S)$.
%
%\vspace{-.1cm}
\begin{align*}
\min_{\mathbf{p} \in  \mathcal{H}} f_{\mathbf{p}}(\hat{S}) \geq \alpha \max_{S:|S|\leq k}\min_{\mathbf{p} \in \mathcal{H}}f_{\mathbf{p}}(S)
\end{align*}
\vspace{-.2cm}

%\tdcomment{Comment this paragraph back in! I removed it for space now!}
Throughout the paper we prove theorems for general feature spaces and hyperparametric models.  When the hyperparametric model $H$ and set of features is clear from context, it will be convenient to use $f_{\theta}$ instead of $f_{\mathbf{p}}$ where $\mathbf{p}$ is the the probability vector $\mathbf{p}$ generated by a hyperparametric model.  We will use the abbreviated notation defined above:
$$ \min_{ \theta \in \Theta  }f_{\theta}(\hat{S}) \geq \alpha \cdot \max_{S:|S|\leq k}\min_{\theta \in \Theta}f_{\theta}(S)$$

%

%\textbf{Paper organization.} In Section~\ref{sec:algorithm} we present an algorithm to perform robust influence maximization for a wide class of generalized linear models that respect minor technical conditions. In Section~\ref{sec:lower_bound} we argue that the hyperparametric approach is necessary to avoid hardness results of the RIM problem. In Section~\ref{sec:experiments} we present experiments validating the performance of our algorithms empirically.

\section{Robust Optimization}
\label{sec:alg}
In this section we describe the main result of the paper.  We show that for an extremely broad class of hyperparametric models robust influence maximization is computationally tractable.  More specifically, we show that for generalized linear hyperparameteric models that are 1-Lipschitz, for any $\epsilon>0$, 
a natural sampling procedure from the hyperparameter space $\Theta = [-B,B]^d$ generates $l \in \tilde{\mathcal{O}} \left ( d\left( \frac{n\cdot m\cdot d}{\epsilon} \right)^d \log \frac{1}{\delta} \right )$ influence functions $f_1,\ldots,f_{l}$ such that with probability $1-\delta$, robust continuous optimization over $\Theta$ reduces to robust discrete optimization over the functions $\{f_i\}_{i=1}^l$. That is, an algorithm that returns $\hat{S} \subseteq V: \,|\hat{S}|\leq k$ satisfying: 
\vspace{-.15cm}
$$
\min_{i\in[l]}f_i(\hat{S}) \geq \alpha\max_{S:|S|\leq k}\min_{i \in [l] }f_{i}(S)
$$
\vspace{-.15cm}
 implies the existence of an algorithm for which:
\vspace{-.15cm}

$$
\min_{\q\in\Theta}f_\q(\hat{S}) \geq \alpha\max_{S:|S|\leq k}\min_{\theta\in \Theta }f_{\theta}(S) -\e.
$$
%\vspace{-.cm}
%$$
%\max_{S:|S|\leq k}\min_{i\in [l]}f_i(S) \geq \max_{S:|S|\leq k}\min_{\theta\in \Theta }f_{\theta}(S) -\epsilon
%$$
%
This reduction from the continuous space of infinitely-many functions $\{f_{\theta}\}_{\theta \in \Theta}$ to polynomially-many functions $\{f_{i}\}_{i\in [l]}$ is handled in two steps.  We first prove that influence functions that are generated by a class of hyperparametric models that we call \emph{stable} have bounded Lipschitzness (Section \ref{sec:lipschitz}).  Using this property we then prove that sampling polynomially-many functions from the hyperparametric model suffices to obtain approximation to the robust objective (Section~\ref{sec:sampling}).  Finally, we show how to produce near optimal solutions to the robust optimization problem defined on $\{f_{\theta}\}_{\theta \in \Theta}$ by implementing a best-response oracle on a set of sampled functions $\{f_{i}\}_{i\in [l]}$ using a Multiplicative Weight Updates (MWU) procedure (Section \ref{sec:MWU}).

\subsection{Stability implies Lipschitzness}\label{sec:lipschitz}
We now prove that $f_{\theta}$ is $L$-Lipschitz for $L \in \texttt{poly}(n)$ if the hyperparametric model that generates it is \emph{stable}.

%The high-level idea of our proof lies on the fact that the influence function is Lipschitz in the parameter $\theta$, assuming that the underlying hyperparametric generalized linear model is also Lipschitz. 

\begin{definition}
A hyperparametric model $H: \Theta \times X \to [0,1]$ is \emph{stable} if it is a generalized linear model that is 1-Lipschitz with respect to the $\ell_1$ norm, i.e. for $\theta\in \Theta, x_e \in X$ we have that $H(\theta,x_e) = h(\theta^\top x_e )$ and for every $\theta,\theta' \in \Theta$: 
$$| h  (\theta^\top x_e ) - h(\theta'^\top x_e )| \leq \| \theta - \theta' \|_1$$
\end{definition}
It is easy to verify that the hyperparametric models used in influence maximization literature are stable: linear $H(\theta,x_e) = \theta^\top x_e$, logistic $H(\theta,x_e) =  \frac{1}{1 + \exp(- \theta^\top x_e)}$ and probit $H(\theta,x_e) = \Phi(\theta^\top x_e)$ where $\Phi$ is the CDF of the standard Gaussian, appear in~\cite{bandits15, bandits17, icml18} and are all stable.

% It is easy to verify that many generalized linear models are 1-Lipschitz.  
%We nonote that note that most natural generative models, including the most popular ones, i.e. when $g$ is linear or sigmoid, satisfy the conditions of this lemma. Other functions that satisfy it are trigonometric functions, e.g. sin, cos, arctan and more.

Intuitively, stable hyperparametric models are not sensitive to small changes of the hyperparameter. Hence, despite the fact that a modification on $\theta$ affects the probabilities in all edges, the difference is not very large and we are able to bound the absolute change in the influence function. The following two lemmas are inspired by~\cite{wei} and \cite{wei2} who proved similar results for non-hyperparametric models.

%induce correlation between the different edge probabilities and as a result modifying the hyperparameter affects the probabilities in all edges. We formalize and prove this in the lemma below.

\begin{lemma}\label{lem:lipschitz}
Assume that the hyperparametric model is stable. Then, the influence function $f_\theta$ is Lipschitz with respect to the $\ell_1$ norm with Lipschitz constant $L = nm$.
\end{lemma}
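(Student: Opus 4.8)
The plan is to split the Lipschitz bound into two independent pieces: the sensitivity of $f_{\mathbf{p}}$ to the edge-probability vector $\mathbf{p}$, and the sensitivity of $\mathbf{p}$ to the hyperparameter $\theta$. The stability assumption will handle the second piece essentially for free, so the crux of the argument is controlling the first.

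First I would exploit the fact that, as a function of the probability vector alone, $f_{\mathbf{p}}(S)$ is multilinear. Writing $f_{\mathbf{p}}(S) = \E_{A}[r_A(S)]$, where each edge $e$ is included in the random set $A$ independently with probability $p_e$, every coordinate $p_e$ enters linearly. Consequently the partial derivative $\partial f / \partial p_e$ equals $\E_{A'}[\,r_{A'\cup\{e\}}(S) - r_{A'\setminus\{e\}}(S)\,]$, the expected increase in the number of reachable nodes caused by inserting edge $e$, with $A'$ ranging over edge subsets of $E\setminus\{e\}$. Since reachability is monotone in the edge set and $r_A(S)\in[0,n]$, this increase always lies in $[0,n]$, giving the uniform bound $|\partial f/\partial p_e|\le n$.

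Next I would convert this coordinatewise bound into an $\ell_1$ bound by a telescoping/hybrid argument. Ordering the edges and changing them one at a time produces intermediate vectors $\mathbf{p}^{(0)}=\mathbf{p},\ldots,\mathbf{p}^{(m)}=\mathbf{p}'$, where $\mathbf{p}^{(i)}$ and $\mathbf{p}^{(i-1)}$ differ only in the $i$-th coordinate. By multilinearity each single-coordinate step equals exactly $(p_{e_i}-p'_{e_i})\cdot(\partial f/\partial p_{e_i})$, bounded in absolute value by $n\,|p_{e_i}-p'_{e_i}|$. Summing along the chain and applying the triangle inequality yields $|f_{\mathbf{p}}(S)-f_{\mathbf{p}'}(S)|\le n\,\|\mathbf{p}-\mathbf{p}'\|_1$.

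Finally I would invoke stability. For $\mathbf{p}=(H(\theta,x_e))_e$ and $\mathbf{p}'=(H(\theta',x_e))_e$, the definition gives $|p_e-p'_e|=|h(\theta^\top x_e)-h(\theta'^\top x_e)|\le\|\theta-\theta'\|_1$ for each of the $m$ edges, hence $\|\mathbf{p}-\mathbf{p}'\|_1\le m\,\|\theta-\theta'\|_1$. Composing the two bounds gives $|f_\theta(S)-f_{\theta'}(S)|\le nm\,\|\theta-\theta'\|_1$, i.e. $L=nm$. The one step requiring care is the partial-derivative bound: its cleanest justification is exactly the monotone reachability structure, which guarantees that the marginal reachability gain from a single edge never exceeds $n$; the remaining steps are just a short composition.
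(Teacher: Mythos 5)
Your proof is correct and follows essentially the same route as the paper's: both decompose the bound into a per-edge sensitivity of $f_{\mathbf{p}}$ of at most $n$, accumulated over the $m$ edges, with stability supplying the bound $|p_e - p'_e| \leq \|\theta - \theta'\|_1$ on each coordinate. The only difference is one of formalization --- where the paper argues informally via monotonicity and reachability (raising one edge probability by $\e$ raises the activation probability of each of at most $n$ reachable nodes by at most $\e$), you justify the identical per-edge bound through multilinearity, the partial-derivative formula, and a telescoping hybrid argument, which is a slightly more rigorous rendering of the same idea.
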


\begin{proof}
Since $H$ is stable we get that for every $e \in E$ and every two edge probabilities $p_e, p'_e$ produced by the hyperparametric model by parameters $\theta,\theta' \in \Theta$: 
\begin{align}
|p_e - p'_e| & = |H(\theta, x_e) - H(\theta', x_e)| \\ 
& = |h(\theta^\top x_e) - h(\theta'^\top x_e)| \leq \|\theta - \theta'\|_1. 
\end{align}
Hence, $\|\theta - \theta'\|_1 \leq \e$ implies $|p_e - p_e'| \leq \e$.

Now notice that the influence function is monotone with respect to the diffusion probabilities. %, i.e. for every $S \subseteq V$, the higher the probabilities on the edges,  i.e. $p_e$s the higher the value $f(S)$. 
 As a result, the maximum change that can occur given the constraint $|p_e - p_e'| \leq \e$ is when $p_e' = \min\{p_e + \e,1\}$ (or $p_e' = \max\{0, p_e - \e\}$) for all $e \in E$. We focus on the first case; the second is identical.

Fix a seed set $S$. When the probability of an edge $e = (u,v)$ is increased from $p_e$ to $p_e+\e$, there is an increase in activation probability for all the nodes that are reachable from $S$ through $u$ by at most $\e$. There are at most $n$ nodes that are reachable through $u$ so the total change in the influence of set $S$ in that case is $n\e$. %There are $m$ edges in the network though and the probability of each of them increases, so 
Using the same argument for each edge in the network gives the desired bound of $nm\e$.
\end{proof}

In our method, the Lipschitz parameter is polynomially related to the complexity of implementing a best-response oracle in the MWU procedure.  The fact that the Lipschitzness $L$ is polynomial in the number of nodes in the graph $n$ is, therefore, a crucial property as it makes the robust optimization problem computationally tractable.

\textbf{Tightness of the Lipschitz constant.} The Lipschitzness of the function polynomially determines the computational complexity of our method.  As we show, the Lipschitzness bound $L=n\cdot m$ is tight in the worst case over all possible graphs (proof is deferred to Appendix A). %The proof is deferred to Appendix A and was inspired by~\cite{wei} who gave a similar example for non-hyperparametric models.

%The intuition behind it is that we will fix a node which at the beginning is disconnected and after modifying the edge probabilities, each of the edges increases the exposure of what the new node can reach. 

%\tdcomment{I changed the proof, I don't see any particular reason why we do it that way though, I think the lambdas complicate the example and make it hard to grasp without any reason. We can maybe keep the previous proof and add this sentence: "We note that this tight example generalizes in the case where our underlying model yields probabilities only in the range $(0,1)$ instead of $[0,1]$, as it happens for example with the logistic model or the probit model."}

\begin{lemma}\label{lemma:tight_example}
There is a graph for which any influence function generated by a stable non-trivial\footnote{$h:\mathbb{R}\rightarrow[\lambda, 1-\lambda]$ is non-trivial if it is continuous and surjective for some $x_e\in X$.} generalized linear model that is continuous has Lipschitzness $L = n m$.
\end{lemma}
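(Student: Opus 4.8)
The plan is to exhibit one explicit extremal family of graphs on which the upper bound $L=nm$ of Lemma~\ref{lem:lipschitz} is attained, and to check it for every stable non-trivial continuous model. I would take $G$ to be a directed line $s=v_0\to v_1\to\cdots\to v_{n-1}$ with seed set $S=\{v_0\}$, so that $m=n-1$. The reason the line is the right candidate is that it maximizes the graph quantity controlling the derivative of the influence: each edge $e_i=(v_{i-1},v_i)$ is a \emph{bridge} for all of its downstream nodes $v_i,\dots,v_{n-1}$, so moving $p_{e_i}$ shifts the activation probability of $n-i$ nodes, and no alternate path dilutes this effect. To strip away the $d$-dimensional structure I would work in $d=1$ with $x_e=1$ for every edge (the general case follows by embedding this into one coordinate of $\theta$ and $X$), so that all edges carry the common probability $p=h(\theta)$ and the whole instance is governed by the scalar $\theta$.

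The key computation is the directional derivative of $f_\theta(S)$ in $\theta$, obtained by the chain rule,
$$\frac{d f_\theta(S)}{d \theta}=h'(\theta)\sum_{i=1}^{m}\frac{\partial f}{\partial p_{e_i}}\Big|_{p_{e}=p},$$
where I evaluate $\sum_i \partial f/\partial p_{e_i}$ for the line at a uniform probability $p$. On the line $\partial f/\partial p_{e_i}=\sum_{j\ge i}\prod_{i<l\le j}p$, which tends to the downstream count $n-i$ as $p\to 1$, so the sum tends to $\sum_{i=1}^{n-1}(n-i)=\binom{n}{2}$, i.e.\ $\Theta(nm)$, matching the $nm$ bound of Lemma~\ref{lem:lipschitz}. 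Combining with a slope $h'(\theta)$ of order one near the operating point gives the claimed Lipschitzness.

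The role of the non-triviality hypothesis (continuity and surjectivity of $h$ onto $[\lambda,1-\lambda]$) is exactly to make this extremal configuration realizable inside $\Theta=[-B,B]^d$ and $X\subseteq[-1,1]^d$. Surjectivity over the attainable range lets me set the common probability $p$ to any value in $(\lambda,1-\lambda)$, in particular as close to $1-\lambda$ as I like, so that the downstream factors $\prod p$ are pushed toward $1$; continuity lets me pass to the finite-difference form $|f_\theta(S)-f_{\theta'}(S)|/\|\theta-\theta'\|_1$ when $h$ is not differentiable, replacing $h'(\theta)$ by a difference quotient that the $1$-Lipschitz non-trivial assumption keeps bounded away from $0$. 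This is what lets the single graph certify tightness \emph{uniformly} over the model class, rather than only for one specific $h$.

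The main obstacle is the constant, which is where the delicacy lives. The naive derivation of Lemma~\ref{lem:lipschitz} charges the maximal per-edge change $n\epsilon$ to each of the $m$ edges, but on the line only edges near the source are pivotal for $\Theta(n)$ nodes while downstream edges are pivotal for few; the honest sum is $\sum_i(n-i)$, and the argument must show that this matches the $nm$ bound in its leading order and that no graph does better. The cleanest way to argue extremality is to observe that at $p\to 1$ one has $\sum_e \partial f/\partial p_e=\sum_v |\mathrm{bridges}(v)|=\sum_v \mathrm{dist}(S,v)$, that adding edges only creates alternate paths which destroy bridges, and that $\sum_v \mathrm{dist}(S,v)$ over all $n$-node graphs is maximized by the line. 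The second delicate point is decoupling the model from the graph: one must verify that for every stable non-trivial continuous $h$ the slope factor entering the chain rule can be kept order one near the chosen operating point, so that the $\Theta(nm)$ behaviour of the graph term survives and is not washed out by a vanishing difference quotient.
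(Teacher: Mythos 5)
Your construction is genuinely different from the paper's. The paper takes a cycle on $n$ nodes with edge probabilities $1-\lambda$ plus a hub $v^\star$ joined to every cycle node (so $m=2n$), and perturbs only the $n$ spoke probabilities from $\lambda$ to $\epsilon=n\lambda$: each spoke, once fired, ignites the whole cycle, so the hub's influence jumps from about $n\epsilon$ to about $n^2\epsilon=\tfrac{nm}{2}\epsilon$. You instead take a directed path seeded at its start, tie all $m=n-1$ edges to one scalar parameter, and push the common probability toward $1$, where $\sum_i \partial f/\partial p_{e_i}\to\binom{n}{2}=\tfrac{nm}{2}$. Both certificates give the same leading order; yours is arguably cleaner because every edge sits in a single probability regime (a one-dimensional reduction with $x_e=1$ suffices), whereas the paper needs two regimes at once (spokes near $\lambda$, cycle near $1-\lambda$), and your bridge-counting observation that $\sum_e\partial f/\partial p_e\to\sum_v \mathrm{dist}(S,v)$ as $p\to 1$, maximized by the path, is an extremality remark the paper neither has nor needs (Lemma~\ref{lem:lipschitz} already supplies the matching upper bound). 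One small slip: your formula $\partial f/\partial p_{e_i}=\sum_{j\ge i}\prod_{i<l\le j}p$ omits the upstream factor $p^{i-1}$; the correct expression is $\sum_{j\ge i}p^{j-1}$, which has the same limit $n-i$, so nothing downstream changes.

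The genuine gap is the step you flag at the end and never close: that the slope of $h$ can be kept of order one \emph{at the operating point your graph needs}. Stability plus non-triviality does not imply this. Your path requires $h'=\Omega(1)$ at values within $O(1/n)$ of the top of the range, because $\sum_j j\,p^{j-1}=\Theta\bigl(\min\{n^2,(1-p)^{-2}\}\bigr)$ is only $\Theta(nm)$ when $1-p=O(1/n)$; for the logistic model --- one of the paper's own canonical stable models --- $h'=h(1-h)=O(1/n)$ in that window, so the chain-rule product collapses to $O(n)$ and your certificate fails for that $h$. Surjectivity only forces average slope $\Omega(1)$ somewhere on the bounded domain; it does not place the steep part where your construction needs it. You should know, however, that the paper's proof has exactly the same hole: it argues purely in probability space (spokes moved from $\lambda$ to $n\lambda$ change the influence by $\approx n^2\epsilon$) and silently assumes a $\theta$-perturbation of $\ell_1$-size $\epsilon$ realizes that probability change, which again requires slope $\approx 1$ in a width-$O(1/n)$ window, now near the bottom of the range --- false for logistic, where moving a probability from $1/n^2$ to $1/n$ costs $\|\Delta\theta\|_1=\Theta(\log n)$ and degrades the bound to $O(n/\log n)$. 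So your proposal sits at the same level of rigor as the paper's own argument: both are valid tightness certificates for the probability-to-influence step (and hence for, e.g., the clipped linear model), and both leave the model-to-parameter coupling unjustified at the same point; closing it would require an explicit assumption that $h$ has $\Omega(1)$ slope in the relevant near-extreme window.
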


\subsection{Covering $\Theta$ via sampling}\label{sec:sampling}
The Lipschitzness property from Lemma~\ref{lem:lipschitz} is useful since it bounds the change in the estimation of influence function $f_{\theta}$ when we approximate it with another function $f_{\theta'}$ for $\theta,\theta'$ that are $\epsilon$-close, i.e. $\|\theta - \theta' \|_1 \leq \epsilon$.  Thus, if we can construct a set of representative parameters $\Theta_{\epsilon}$ s.t. $\forall \theta \in \Theta$ there is some $\theta'$ which is $\epsilon$ close with respect to the $\ell_1$ norm we can reduce the robust optimization problem over the infinite hyperparameter space to the finite case.  We know now this can be done using a reasonable number of samples from $\Theta$.

\begin{lemma}\label{lemma:sampling}
Let $\Theta = [-B, B]^d$ and $\e, \delta > 0$. If we sample a set $\Theta_\e$ of size $s \in \tilde{\Q} \left (d \left (\frac{Bd}{\epsilon}\right)^d   \log\frac{1}{\delta} \right )$ uniformly at random from $\Theta$, then with probability at least $1-\delta$, for any $\q \in \Theta$ there exists $\q' \in \Theta_\e$ such that $\|\q - \q'\|_1 \leq \e$.
\end{lemma}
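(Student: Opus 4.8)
The plan is to reduce the statement to a coupon-collector argument on a grid discretization of the cube $\Theta = [-B,B]^d$. First I would partition $\Theta$ into a uniform grid of axis-aligned subcubes (cells) of side length $a = \epsilon/d$. Along each of the $d$ coordinates the interval $[-B,B]$ of length $2B$ is cut into $\lceil 2B/a \rceil = \lceil 2Bd/\epsilon \rceil$ pieces, so the total number of cells is $N = \lceil 2Bd/\epsilon \rceil^d = \mathcal{O}\!\left((Bd/\epsilon)^d\right)$, where constant factors (including the base) are absorbed. The key geometric observation is that any two points lying in the same cell differ by at most $a$ in every coordinate, hence their $\ell_1$ distance is at most $d \cdot a = \epsilon$. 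Consequently, if the sampled set $\Theta_\epsilon$ contains at least one point in every cell, then for an arbitrary $\theta \in \Theta$ --- which lies in some cell $C$ --- the sample $\theta' \in \Theta_\epsilon \cap C$ satisfies $\|\theta - \theta'\|_1 \le \epsilon$, which is exactly the covering guarantee we want. Thus it suffices to bound the number of uniform samples needed to hit every cell.

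Next I would carry out the coupon-collector estimate. Since each cell has equal volume (boundary cells only help), a single uniform sample lands in a fixed cell with probability $1/N$, so the probability that a particular cell is missed by all $s$ independent samples is $(1 - 1/N)^s \le e^{-s/N}$. By a union bound over the $N$ cells, the probability that some cell is left empty is at most $N e^{-s/N}$. Requiring this to be at most $\delta$ gives $s \ge N\big(\ln N + \ln(1/\delta)\big)$. Substituting $N = \mathcal{O}\!\left((Bd/\epsilon)^d\right)$ and $\ln N = \mathcal{O}\!\left(d\log(Bd/\epsilon)\right)$ yields $s = \mathcal{O}\!\left((Bd/\epsilon)^d \big(d\log(Bd/\epsilon) + \log(1/\delta)\big)\right)$, which after absorbing the $\log(Bd/\epsilon)$ factor into the $\tilde{\mathcal{O}}$ notation is exactly $\tilde{\mathcal{O}}\!\left(d(Bd/\epsilon)^d \log(1/\delta)\right)$, as claimed. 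With this value of $s$, with probability at least $1-\delta$ every cell is hit, and the covering property established above then holds for every $\theta \in \Theta$.

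The argument has no deep obstacle; the only points requiring care are (i) choosing the cell side length so that the intra-cell \emph{diameter in the $\ell_1$ norm} (rather than $\ell_\infty$ or $\ell_2$) is controlled by $\epsilon$ --- this is precisely why the side length must scale as $\epsilon/d$ and why the factor $d$ surfaces inside the base $(Bd/\epsilon)$ --- and (ii) the bookkeeping of logarithmic and constant factors needed to match the stated $\tilde{\mathcal{O}}$ bound. I expect the main subtlety to be confirming that hitting every cell is genuinely \emph{sufficient} for the net property (any unsampled $\theta$ shares its cell with a sample and is therefore within $\epsilon$), and applying the union bound to the correct events (empty cells) so that the exponent $-s/N$ combines properly with $\ln N$; both are routine once the grid is fixed.
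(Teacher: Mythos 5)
Your proof is correct and follows essentially the same route as the paper: discretize $[-B,B]^d$ into $\mathcal{O}\left(\left(Bd/\epsilon\right)^d\right)$ regions of $\ell_1$ diameter at most $\epsilon$, then apply a union-bound/coupon-collector argument to guarantee every region receives at least one of the $s$ uniform samples. The only difference is cosmetic --- the paper covers the cube with $\ell_1$ balls of radius $\epsilon$ rather than partitioning it into disjoint subcubes of side $\epsilon/d$; if anything your version is slightly cleaner, since equal-volume disjoint cells are each hit with probability exactly $1/N$, whereas the paper's overlapping ball cover leaves the step $\Pr[\text{ball } b_j \text{ missed}] \leq (1-1/r)^s$ implicitly justified.
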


\begin{proof}
It suffices to partition the hypercube $[-B, B]^d$ into $\ell_1$ balls of radius $\e$ and bound the number of points necessary to sample in order to have at least one point in each $\ell_1$ ball. Since every point in $\Theta$ is covered by a ball and every two points in the ball are close, sampling points that correspond to every ball is a representative set of parameters that are $\epsilon$-close w.r.t. the $\ell_1$ distance to any parameter in $\Theta$. 

It is well known that $r = \left(\frac{2B\cdot d}{\epsilon} \right )^d$ balls of $\ell_1$ radius $\epsilon$ suffice to cover the $d$-dimensional hypercube $[-B, B]^d$.  Let $b_1,\ldots,b_r$ denote the covering balls and $\theta_1,\ldots,\theta_{s}$ be points drawn uniformly at random from $\Theta = [-B, B]^d$.  Then:
\begin{align*}
\Pr[\exists b_j \text{ such that } \nexists  \theta_i \in b_j]
& \leq \sum_{j}\Pr[\nexists  \theta_i \in b_j]\\
& \leq r\Pr[\nexists  \theta_i \in b_1] \\
& \leq r\left(1-\frac{1}{r}\right)^s
\end{align*}
%
%The first inequality holds by the union bound, the second because $X_i$s are i.i.d. and uniformly distributed and the last because the space is split into $N$ $\ell_1$ balls and we want to fail to hit a particular one $\ell$ times.
Using $s \in \Q\left(d(\frac{Bd}{\epsilon})^d\log\frac{Bd}{\e\delta}\right)$ samples we get:
$$\Pr[\exists b_j \text{ such that } \nexists  \theta_i \in b_j]  \leq \delta$$
Thus, since $\Theta \subseteq \cup_j b_j$ and the radius of the balls is $\epsilon$ we get that for every $\theta \in \Theta$ there exists a $\theta' \in \{\theta_i\}_{i \in [s]} $ that is $\epsilon$ close w.r.t. $\ell_1$ norm with probability at least $1-\delta$.
\end{proof}

Furthermore, we can extend the notion of \emph{covering} a convex space $\Theta$ to the coverage of a family of functions.

\begin{definition}\label{def:cover}
Let $\mathcal{F} = \{f_\q:2^V\to \mathbb{R}\,|\, \q \in \Theta\}$ be a family of influence functions and $\mathcal{F}_\e \subset \mathcal{F}$ such that $|\mathcal{F}_\e| < \infty$. We say that $\mathcal{F}_\e$ $\e$-\emph{covers} $\mathcal{F}$ if for any $f \in \mathcal{F}$  there exists an $f_\e \in \mathcal{F}_\e$ and any $S \subseteq V$ s.t. $|f(S) - f_\e(S)| \leq \e$.
\end{definition}

The following corollary is obtained by fusing lemmas~\ref{lem:lipschitz} and \ref{lemma:sampling} with Definition~\ref{def:cover}.

\smallskip
\begin{corollary}\label{cor:cover}
Let $\mathcal{F} = \{f_\q:2^V\to \mathbb{R}\,|\, \q \in \Theta\}$ be the family of influence functions and let $\mathcal{F}_\e$ be sampled uniformly at random from $\mathcal{F}$, such that $|\mathcal{F}_\e| \in  \tilde{\mathcal{O}} \left (d\left(\frac{LBd}{\epsilon}\right)^d  \log\frac{1}{\delta} \right )$. Then $\mathcal{F}_\e$ $\e$-covers $\mathcal{F}$ with probability at least $1-\delta$.
\end{corollary}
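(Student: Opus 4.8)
The plan is to combine the two lemmas directly. By Lemma~\ref{lem:lipschitz}, each influence function $f_\theta$ is $L$-Lipschitz with respect to the $\ell_1$ norm on the hyperparameter, where $L = nm$. This means that whenever two hyperparameters $\theta, \theta'$ satisfy $\|\theta - \theta'\|_1 \leq \e'$ for some $\e'$, we have $|f_\theta(S) - f_{\theta'}(S)| \leq L \e'$ for every seed set $S \subseteq V$. Thus to achieve an $\e$-cover in the sense of Definition~\ref{def:cover}, it suffices to guarantee that every $\theta \in \Theta$ has a sampled neighbor within $\ell_1$ distance $\e' = \e / L$.

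The key step is then to invoke Lemma~\ref{lemma:sampling} with the rescaled accuracy parameter $\e/L$ in place of $\e$. That lemma guarantees that a sample of size $s \in \tilde{\Q}\left(d\left(\frac{Bd}{\e/L}\right)^d \log\frac{1}{\delta}\right) = \tilde{\Q}\left(d\left(\frac{LBd}{\e}\right)^d \log\frac{1}{\delta}\right)$ drawn uniformly from $\Theta$ produces, with probability at least $1-\delta$, a set $\Theta_{\e/L}$ such that for every $\theta \in \Theta$ there is a $\theta' \in \Theta_{\e/L}$ with $\|\theta - \theta'\|_1 \leq \e/L$. This matches exactly the sample-complexity bound stated in the corollary, which is precisely why the Lipschitz constant $L$ enters the sample count in the claimed way.

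Finally I would translate the covering of $\Theta$ into the covering of $\mathcal{F}$. Define $\mathcal{F}_\e := \{ f_{\theta'} : \theta' \in \Theta_{\e/L} \}$. Take any $f = f_\theta \in \mathcal{F}$; by the sampling guarantee there is some $\theta' \in \Theta_{\e/L}$ with $\|\theta - \theta'\|_1 \leq \e/L$, and setting $f_\e := f_{\theta'} \in \mathcal{F}_\e$, the Lipschitz bound gives $|f(S) - f_\e(S)| \leq L \cdot (\e/L) = \e$ for every $S \subseteq V$, which is exactly Definition~\ref{def:cover}. Since the sampling guarantee holds with probability at least $1-\delta$, so does the conclusion that $\mathcal{F}_\e$ $\e$-covers $\mathcal{F}$.

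I do not anticipate a genuine obstacle here, as the corollary is essentially bookkeeping that fuses the Lipschitz estimate with the covering lemma. The only point requiring care is the rescaling of the accuracy parameter: one must feed $\e/L$ rather than $\e$ into Lemma~\ref{lemma:sampling}, and then verify that substituting $\e/L$ for $\e$ in that lemma's sample-size expression reproduces the $(LBd/\e)^d$ factor appearing in the corollary. A secondary subtlety is that Definition~\ref{def:cover} requires the approximation to hold uniformly over all $S \subseteq V$ for the single chosen $f_\e$; this is automatic because the Lipschitz bound of Lemma~\ref{lem:lipschitz} is itself uniform in $S$, so the same $\theta'$ works simultaneously for every seed set.
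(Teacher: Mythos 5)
Your proof is correct and is exactly the argument the paper intends: the paper gives no explicit proof of Corollary~\ref{cor:cover}, stating only that it follows by ``fusing'' Lemma~\ref{lem:lipschitz} and Lemma~\ref{lemma:sampling} with Definition~\ref{def:cover}, and your rescaling of the accuracy parameter to $\e/L$ before invoking the sampling lemma is precisely that fusion, reproducing the stated $\left(\frac{LBd}{\e}\right)^d$ sample bound. Your closing remark that the Lipschitz estimate is uniform in $S$ (so one $\theta'$ serves all seed sets simultaneously) correctly handles the only subtlety in Definition~\ref{def:cover}.
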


%\bigskip

\section{Reducing Continuous to Discrete RO}\label{sec:MWU}

%\tdcomment{put example in app
%discuss alg
%change s to l
%prelim: bicriteria approx after inapproximability
%write how we blow up the budget and how does this affect the solution value
%add 4 citations, nips17, nika, kempe, wei, stephanie
%notation $f_\q$
%already put the influence function in the hyperparametric
%noise in footnote}

\begin{algorithm}[tb]
   \caption{HIRO: Hyperparam Inf Robust Optimizer}
   \label{alg:inf_max}
\begin{algorithmic}
   \STATE {\bfseries Input:} %Graph 
   ${G=(V,E),%features 
   \{x_e\}_{e \in E},%hyperparametric model 
   H:\Theta\times X \to [0,1], %parameters 
   \e, \delta}$
   \STATE $\,$
   %\STATE Sample $l$ vectors $\theta_1,\ldots,\theta_l$ u.a.r. from $\Theta$
   %\STATE \textbf{Initialize} $w_1[i] = 1/l$ for all $i \in l$
   \STATE $l \leftarrow \tilde{\mathcal{O}} \left (d\left(\frac{LBd}{\epsilon}\right)^d  \log\frac{1}{\delta} \right )$
   \STATE $T \leftarrow \tilde{\Q}\left(\frac{d(\log n + \log\log\frac{1}{\delta})}{\e^2}\right)$, $\,\,\eta \leftarrow \frac{\log l}{2T}$
   \STATE $\,$
   \STATE $f_{1},\ \ldots, \ f_{l}  \hspace{0.27in}      \leftarrow \textsc{Sample}(G,H,\Theta,\{x_e\}_{e \in E})$%Construct influence functions $\{f_{\theta_i}\}_{i \in l}$, based on the graph $G$, the generative model $g$, the samples $\{\q_i\}_{i \in \ell}$ and the features $\{x_e\}_{e \in E}$
   \STATE $w_1[1],\ldots,w_{1}[l] \leftarrow 1/l,\ldots,1/l$ %for all $i \in l$
   \FOR{each time step $t\in [T]$}
   \STATE $S_t \leftarrow \textsc{Greedy}(\sum_{i=1}^l w_t[i]f_{i})$
   \FOR{each $i\in [l]$} 
   \STATE $$w_t[i] \propto \text{exp}\left\{-\eta\sum_{\tau=1}^{t-1}f_{i}(S_\tau)\right\}$$
   \ENDFOR
   %\STATE
   \ENDFOR
   \STATE {\bfseries Output:} select $S$ u.a.r. from $\{S_1, S_2, \ldots, S_T\}$
\end{algorithmic}
\end{algorithm}

In the previous section we proved that since the influence function is Lipschitz, the infinite family of functions $\mathcal{F} = \{f_\q\,|\, \q \in \Theta\}$ can be well-approximated by the finite (and crucially, polynomially-sized) family $\mathcal{F}_\e = \{f_{\q_\e}\,|\, \q_\e \in \Theta_\e\}$, where $\Theta_\e$ is obtained by sampling u.a.r. from $\Theta$. Hence, the task of robust influence maximization in the hyperparametric model intuitively reduces to finding a procedure to perform robust optimization on a finite set of functions.

%In every step we find a different convex combination of the functions putting higher emphasis on the functions that performed poorly in the previous iteration. Using the fact that the convex combination of submodular functions is submodular we can utilize the Greedy algorithm as our approximate Bayesian oracle to obtain a candidate set. The procedure is summarized in Algorithm~\ref{alg:inf_max}

%In the following lemma we reduce discrete robust optimization under a Lipschitz condition to continuous robust optimization.

The following lemma formalizes this intuition by stating that, under mild conditions, $\alpha$-approximate continuous robust optimization reduces to $\alpha$-approximate discrete robust optimization. The proof is deferred to Appendix A.%\tdcomment{The proof is deferred to Appendix~\ref{app:proofs}.}

\begin{lemma}\label{lemma:reduction}
Let $\mathcal{F} = \{f_\theta:2^V\to \mathbb{R}\,|\, \theta \in \Theta \subseteq [-B,B]^d\}$ be a family of influence functions. Consider a family $\mathcal{F}_\e \subset \mathcal{F}$ s.t. $\F_\e$ $\e$-covers $\F$. Then, $\alpha$-approximate robust optimization on $\mathcal{F}$ reduces to $\alpha$-approximate robust optimization on $\F_\e$. That is, an algorithm that returns $\hat{S} \subseteq V$:
\vspace{-.1cm}
$$\min_{f_{\theta}\in \F_\e}f_{\theta}(\hat{S}) \geq \alpha\cdot \max_{S\subseteq V}\min_{f_{\theta} \in \F_{\epsilon}}f_{\theta}(S)$$
\vspace{-.1cm}
implies an algorithm that for any $\e > 0$ returns $\hat{S} \subseteq V$ s.t.:
\vspace{-.1cm}
$$\min_{f_{\theta} \in \mathcal{F}}f_{\theta}(\hat{S}) \geq \alpha\cdot \max_{S\subseteq V}\min_{f_{\theta} \in \mathcal{F}}f_{\theta}(S) - \e.$$
\vspace{-.1cm}
%with probability at least $1-\delta$, for any $\delta > 0$.
\end{lemma}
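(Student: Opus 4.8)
The plan is to show that the very set $\hat{S}$ returned by the assumed discrete robust optimizer on $\F_\e$ \emph{already} satisfies the continuous guarantee up to an additive $\e$, so the reduction requires no post-processing of the output. Concretely, let $\hat{S}$ be the output of the $\alpha$-approximation algorithm for the discrete problem, so that $\min_{f_\theta \in \F_\e} f_\theta(\hat{S}) \ge \alpha \cdot \max_{S\subseteq V} \min_{f_\theta \in \F_\e} f_\theta(S)$. I would then control two quantities separately and then chain them: a lower bound on the continuous value $\min_{f_\theta \in \F} f_\theta(\hat{S})$ attained by $\hat{S}$, and a comparison between the discrete and continuous optima $\max_S \min_{\F_\e}$ and $\max_S \min_{\F}$.

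For the first bound I would invoke the $\e$-covering property (Definition~\ref{def:cover}) \emph{pointwise at the single set} $\hat{S}$. Fix an arbitrary $f_\theta \in \F$; by covering there exists $f_{\theta'} \in \F_\e$ with $|f_\theta(\hat{S}) - f_{\theta'}(\hat{S})| \le \e$, hence $f_\theta(\hat{S}) \ge f_{\theta'}(\hat{S}) - \e \ge \min_{f_{\theta''} \in \F_\e} f_{\theta''}(\hat{S}) - \e$. Taking the infimum over $f_\theta \in \F$ gives $\min_{f_\theta \in \F} f_\theta(\hat{S}) \ge \min_{f_{\theta'} \in \F_\e} f_{\theta'}(\hat{S}) - \e$. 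Crucially this uses the covering only at $\hat{S}$, not uniformly over all seed sets.

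For the second comparison I would use \emph{only} the inclusion $\F_\e \subseteq \F$: for every fixed $S$, minimizing over the larger family can only decrease the value, so $\min_{f_\theta \in \F_\e} f_\theta(S) \ge \min_{f_\theta \in \F} f_\theta(S)$, and applying $\max_S$ to both sides preserves the inequality, yielding $\max_S \min_{f_\theta \in \F_\e} f_\theta(S) \ge \max_S \min_{f_\theta \in \F} f_\theta(S)$. Chaining the discrete guarantee with both bounds (and using $\alpha \ge 0$) gives $\min_{f_\theta \in \F} f_\theta(\hat{S}) \ge \min_{f_{\theta'} \in \F_\e} f_{\theta'}(\hat{S}) - \e \ge \alpha \max_S \min_{f_\theta \in \F_\e} f_\theta(S) - \e \ge \alpha \max_S \min_{f_\theta \in \F} f_\theta(S) - \e$, which is exactly the claim.

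The argument is a short sandwiching, so there is no heavy computation; the step I would be most careful about is keeping the directions of the nested $\min$/$\max$ inequalities straight. The key conceptual point is an asymmetry: the covering slack is needed in only one direction, to ensure the value of $\hat{S}$ does not collapse when the adversary is allowed to range over all of $\F$, whereas the comparison of optima is \emph{free} from set inclusion, since shrinking the adversary's family $\F \to \F_\e$ can only raise the max-min value. I expect the main obstacle to be purely bookkeeping in identifying which of the two directions must absorb the additive $\e$.
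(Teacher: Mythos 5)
Your proposal is correct and follows essentially the same route as the paper's own proof: the covering property supplies the additive $\e$ slack on the value of $\hat{S}$, the inclusion $\F_\e \subseteq \F$ gives $\max_S \min_{\F_\e} \ge \max_S \min_{\F}$ for free, and the two are chained through the discrete guarantee. Your observation that the cover is only needed pointwise at $\hat{S}$ is a small tightening of the paper's presentation (which states the bound for all $S$ and compares explicit maximizers), but it is the same decomposition and the same argument.
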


%argued that intuitively, since the hyperparametric influence function is Lipschitz with respect to $\theta$ we can sample a few functions to appropriately cover the space. Hence, we need to use a procedure to identify the important functions in the discrete space. We utilize MWU to iteratively increase the importance of the functions that behave poorly  and reduce the importance of the ones that behave well, i.e. the ones that we are already robust against.

%In every step we find a different convex combination of the functions by putting higher emphasis on the functions that performed poorly in the previous iteration. Using the fact that the convex combination of submodular functions is submodular we can utilize the Greedy algorithm as our approximate Bayesian oracle to obtain a candidate set. The procedure is summarized in Algorithm~\ref{alg:inf_max}.

%We will use the following fact to argue that the Greedy algorithm for influence maximization in an $(1 -1/e)$-approximate Bayesian oracle.
%
%\begin{fact}
%Let $f(S) = \sum_{i=1}^Nw_if_i(S)$, for any $S \subseteq \mathcal{X}$, where is a ground set, $f_i$: monotone submodular function and $w_i \geq 0$ for all $i \in [N]$. Then $f$ is monotone submodular.
%\end{fact}

%We are now ready to prove Theorem~\ref{thm:robust} that gives ...

%robust hyperparametric influence maximization.

\paragraph{The HIRO algorithm.}  Given the reduction from the continuous hyperparametric problem to the discrete we can now describe the Hyperparameteric Influence Robust Optimizer ($\textsc{HIRO}$) Algorithm~\ref{alg:inf_max} which gives an optimal bi-criteria approximation to the robust influence maximization in the hyperparametric setting.  $\textsc{HIRO}$ takes as input a graph $G$, the edge features $\{x_e\}_{e \in E}$, the hyperparametric model $(H, \Theta)$ that dictates the edge probabilities and an error parameter $\e$ that controls the quality of the returned solution. The output is a set of $k$ nodes. It starts by sampling $l$ points from $\Theta$ and hence, constructing $l$ different influence functions that serve as a proxy for the continuous problem. It assigns uniform weights to these functions and runs MWU for a number of steps that depends on $\e$, where in each step higher emphasis is placed on the ones with poor historical performance. %The higher the number of steps the ``finer'' the solution that will be returned. 
In every iteration it optimizes a convex combination of the functions, which is possible since all $f_i$s are monotone submodular, and hence amenable to optimization using the \textsc{GREEDY} algorithm. It keeps the outcome as a candidate solution. In the end, one of the candidate solutions is returned u.a.r. The intuition is that since each solution is good for some iteration of the algorithm (meaning for some specific weighting of the $f_i$s), on expectation the solution that is returned is good for all the functions that performed poorly for some iteration of the MWU and hence, robust.

%\bigskip

\begin{theorem}\label{thm:robust}
$\textsc{HIRO}$ with error parameter $\e$, runs in time $\texttt{poly}(n, \e, \log(1/\delta))$ and returns the uniform distribution $\mathcal{U}$ over solutions $\{S_1, \ldots, S_T\}$, s.t. with probability at least $1 - \delta$:
$$
\min_{\theta\in \Theta}\E_{S\sim\mathcal{U}}[f_{\theta}(S)] \geq \left(1 - \frac{1}{e}\right)\max_{S:|S|\leq k}\min_{\theta\in \Theta}f_\theta(S) - 2\e.
$$
where $T \in \tilde{\Q}\left(\frac{d(\log n + \log\log\frac{1}{\delta})}{\e^2}\right)$.
\end{theorem}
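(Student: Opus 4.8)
The plan is to cast the robust optimization over the finitely many sampled functions $\{f_i\}_{i=1}^l$ as a two-player zero-sum game and to show that the \textsc{MWU} loop of Algorithm~\ref{alg:inf_max}, paired with the \textsc{Greedy} best-response oracle, computes an approximate equilibrium whose value is within a $(1-1/e)$ factor and an additive $\e$ of the discrete robust optimum $v_l^\star := \max_{|S|\le k}\min_{i\in[l]}f_i(S)$. I would then invoke Lemma~\ref{lemma:reduction} (via Corollary~\ref{cor:cover}) to transfer this guarantee from $\{f_i\}$ back to the full continuous family $\{f_\theta\}_{\theta\in\Theta}$, paying a second additive $\e$, which accounts for the $-2\e$ in the statement. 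The whole argument holds with probability $1-\delta$: one part of the $\delta$ budget is spent on the event that the sampled functions $\e$-cover $\mathcal F$ (Corollary~\ref{cor:cover}), and the remainder on the concentration needed for the approximate influence oracle inside \textsc{Greedy}.

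The heart of the proof is the \textsc{MWU} analysis on the discrete family. First I would record the per-round guarantee: since every $f_i$ is monotone submodular, so is the convex combination $\sum_i w_t[i] f_i$, hence \textsc{Greedy} returns $S_t$ with $\sum_i w_t[i] f_i(S_t) \ge (1-1/e)\max_{|S|\le k}\sum_i w_t[i]f_i(S)$. Because a weighted average dominates the minimum, $\sum_i w_t[i] f_i(S) \ge \min_i f_i(S)$ for every $S$, so taking the max over $S$ gives $\max_{|S|\le k}\sum_i w_t[i] f_i(S) \ge \max_{|S|\le k}\min_i f_i(S) = v_l^\star$, and therefore $\sum_i w_t[i]f_i(S_t)\ge (1-1/e)\,v_l^\star$ in every round $t$. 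Averaging over the $T$ rounds yields $\frac1T\sum_{t}\sum_i w_t[i] f_i(S_t) \ge (1-1/e)\,v_l^\star$. Next I would apply the regret guarantee of the exponential-weights update, which governs the min-player's side of the game (the update $w_t[i]\propto\exp(-\eta\sum_{\tau<t}f_i(S_\tau))$ places mass on the functions with smallest cumulative influence so far): for the chosen step size $\eta$, the weighted cumulative value is at most that of the single function with smallest cumulative value, plus a regret term $R$, i.e. $\frac1T\sum_t\sum_i w_t[i]f_i(S_t) \le \min_{i\in[l]} \frac1T\sum_t f_i(S_t) + R$. The crucial observation is that $\frac1T\sum_t f_i(S_t) = \E_{S\sim\mathcal U}[f_i(S)]$ for the uniform distribution $\mathcal U$ over $\{S_1,\dots,S_T\}$ returned by the algorithm. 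Chaining the two inequalities gives $\min_{i\in[l]}\E_{S\sim\mathcal U}[f_i(S)] \ge (1-1/e)\,v_l^\star - R$.

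It then remains to bound $R$ by $\e$. With $l$ experts and $T$ rounds the exponential-weights regret is $O(\rho\sqrt{(\log l)/T})$, where $\rho$ bounds the range of the per-round gains $f_i(S_t)$; since $\log l = \tilde{\mathcal O}(d(\log n + \log\log\tfrac1\delta))$ for $L=nm$, the stated $T \in \tilde{\mathcal O}(d(\log n + \log\log\tfrac1\delta)/\e^2)$ is exactly the horizon (with $\eta$ tuned to match) that drives $R$ below $\e$ once the gains are normalized to a constant range. Combining $\min_i\E_{\mathcal U}[f_i(S)]\ge(1-1/e)v_l^\star - \e$ with Lemma~\ref{lemma:reduction} yields the claimed $\min_\theta\E_{S\sim\mathcal U}[f_\theta(S)] \ge (1-1/e)\max_{|S|\le k}\min_\theta f_\theta(S) - 2\e$. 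For the running time, for constant $d$ both the number of sampled functions $l$ and the horizon $T$ are polynomial; each round performs one \textsc{Greedy} maximization (a sequence of $k$ marginal-gain computations over the weighted influence objective) together with $O(l)$ weight updates, so the total is $\texttt{poly}(n,1/\e,\log\tfrac1\delta)$.

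I expect the main obstacle to be the regret-versus-horizon bookkeeping: forcing the additive error to come out to exactly $\e$ with $T = \tilde{\mathcal O}((\log l)/\e^2)$ requires controlling the range $\rho$ of the influence values (which can be as large as $n$) through the correct normalization and a matching choice of $\eta$, so that the $n$-dependence is absorbed and does not reappear in $T$. A secondary but real subtlety is that the values $f_i(S)$ are \#P-hard to evaluate exactly, so \textsc{Greedy} must rely on an approximate influence oracle; I would handle this with standard reverse-reachability/Monte-Carlo estimation, a Chernoff bound, and a union bound over the $O(Tkl)$ oracle calls, charging the failure probability to the $\delta$ budget and the estimation error to the additive $\e$ terms.
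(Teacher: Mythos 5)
Your proposal is correct and follows essentially the same route as the paper: sample an $\e$-cover via Corollary~\ref{cor:cover}, run MWU with \textsc{Greedy} as an approximate best-response oracle on the discrete family, transfer the guarantee back to $\Theta$ through Lemma~\ref{lemma:reduction}, and tune $T$ so the regret term drops below $\e$. The only difference is that the paper invokes the MWU-plus-best-response guarantee as a black box from \cite{nips17}, whereas you re-derive it via the standard zero-sum-game regret argument; the two subtleties you flag (the range of $f_i$ entering the regret bound, and the need for an approximate influence oracle inside \textsc{Greedy}) are genuine but are inherited by, and glossed over in, the paper's citation-based proof as well.
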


%\begin{theorem}\label{thm:robust}
%$\textsc{HIRO}$, called with parameters $l = \tilde{\Q}\left(\left(\frac{L}{\epsilon}\right)^d\log\frac{1}{\delta}\right)$ and $\eta = \sqrt{\frac{\log l}{2T}}$ runs in time $poly(T, L, n)$ and returns the uniform distribution $\mathcal{U}$ over solutions $\{S_1, S_2, \ldots, S_T\}$, such that with probability at least $1 - \delta$:
%\begin{multline*}
%\min_{\theta\in \Theta}\E_{S\sim\mathcal{U}}[f_{\theta}(S)] \geq \left(1 - \frac{1}{e}\right)\max_{S:|S|\leq k}\min_{\theta\in \Theta}f_\theta(S) \\- \Q\left(\sqrt{\frac{\log(LT) + \log\log(1/\delta)}{T}}\right).\end{multline*}
%where $L$ is the Lipschitz constant of $f_\q$ and $\e = \sqrt{\frac{\log(LT) + \log\log(1/\delta)}{T}}$.
%\end{theorem}

\begin{proof}%[Proof sketch]
%\tdcomment{The full proof is in Appendix~\ref{app:proofs}}.
From Corollary~\ref{cor:cover} we know that constructing $l$ different influence functions $f_1, \ldots, f_l$ by sampling $l = \tilde{\mathcal{O}} \left (d\left(\frac{LBd}{\epsilon}\right)^d  \log\frac{1}{\delta} \right )$ points u.a.r. from $\Theta$ yields an $\e$-cover of $\Theta$ with probability at least $1-\delta$.
%Since $L \leq n\cdot m$ (Lemma~\ref{lem:lipschitz}) and $d$ is constant according to the main assumption of the hyperparametric model, we know that constructing the cover takes polynomial time.
%From Lemma~\ref{lem:lipschitz} we know that $f_\q$ is Lipschitz wrt the $\ell_1$ norm and its Lipschitz constant $L \leq n\cdot m$. Moreover, the main assumption of the hyperparametric model is that $d$ is constant. Thus, Lemma~\ref{lemma:reduction} applies, i.e. if we sample a set $\Theta_\e$ of size $s = \tilde{\Q}((L/\e)^d\log(1/\delta))$ uniformly at random from $\Theta$ and we have an $\alpha$-approximate robust optimization algorithm for $\Theta_\e$ then we can perform $\alpha$-approximate robust optimization for $\Theta$ in polynomial time.
%\tdcomment{Should we write that or make it seem like we have more contribution than simply using MWU as black-box?}
%Chen et al. in \cite{nips} proved that 
%\cite{nips17} proved that for influence functions $f_1, f_2, \ldots, f_l$ the MWU procedure, run for $T$ iterations, with a learning rate of  $\eta = {\log(l)}/{2T}$ and using the GREEDY
%algorithm as the approximate Bayesian o

Since $L \leq n\cdot m$ (Lemma~\ref{lem:lipschitz}) and $d$ is constant according
to the main assumption of the hyperparametric model, we know that constructing the cover takes polynomial time.

\cite{nips17} proved that for influence functions $f_1, \ldots, f_l$, the MWU procedure, run for $T$ iterations, with a learning rate of of $\eta = {\log(l)}/{2T}$ that uses $\textsc{GREEDY}$ as an approximate best-response oracle, returns the uniform distribution over $T$ solutions s.t.:

%It can be shown that the MWU procedure with $T$ iterations and a learning rate of $\eta = {\log(l)}/{2T}$ that uses $\textsc{GREEDY}$ as an approximate best-response oracle returns the uniform distribution over $T$ solutions s.t.:

\vspace{-.3cm}
\small{\begin{multline*}
\min_{i \in [l]}\E_{\hat{S}}[f_i(\hat{S})] \geq \left(1-\frac{1}{e}\right)\max_{S}\min_{i \in [l]}f_i(S) - \Q\left(\sqrt{\frac{\log(l)}{T}}\right)
\end{multline*}}
%\vspace{-.1cm}
By applying Lemma~\ref{lemma:reduction} we get:
\vspace{-.1cm}
\small{\begin{multline*}
\min_{\theta \in \Theta}\E_{\hat{S}\sim U}[f_\q(\hat{S})] \geq \left(1-\frac{1}{e}\right)\max_{S}\min_{\q \in \Theta}f_\q(S) \textrm{-} \e \textrm{-}\Q\left(\sqrt{\frac{\log(l)}{T}}\right)
\end{multline*}}
\vspace{-.1cm}

For \small{$l \in \tilde{\Q}\left(d\left(\frac{LBd}{\epsilon}\right)^d\log\frac{1}{\delta}\right)$} we get the desired bound by setting $T \in \tilde{\Q}\left(\frac{d(\log n + \log\log\frac{1}{\delta})}{\e^2}\right)$ as required, since $L\cdot B \in \Q(n)$.%, which is polynomial in all the parameters of interest and hence the MWU runs in poly-time, to complete the proof (remember that $LB = \Q(n)$).
%
%The error term $\e + O\left(\sqrt{\frac{\log \ell}{T}}\right)$, for $\ell = \Q\left(\left(\frac{L}{\epsilon}\right)^d\log\frac{L}{\e\delta}\right)$ becomes: $\Q\left(\e + \sqrt{\frac{d\log{(L/\e) + \log\log{\frac{1}{\e\delta}}}}{T}}\right)$, which reduces to $\Q\left(\sqrt{\frac{\log(LT) + \log\log(1/\delta)}{T}}\right)$ for $\e = \sqrt{\frac{\log(LT) + \log\log(1/\delta)}{T}}$.
%
%Since by the definitions of $\e$ and $\ell$ we see that the number of functions that we have to sample is polynomial in $T, L$ and $n$ and hence the MWU algorithm as well as the approximate Bayesian oracle runs in polynomial time completing the proof.
%
%There is a tradeoff between $\e$ and $\ell$ since $\ell = \Q\left(\left(\frac{L}{\epsilon}\right)^d\log\frac{L}{\e\delta}\right)$ becomes.
%
%
%\begin{itemize}
%\item there is a tradeoff between $\e$ and the other error, the smaller the $\e$ the larger the $m$ so the larger the other error
%\item you need to balance these two terms, set $\e = $
%\item assumming Lipschitzness say what this bound can be
%\end{itemize}
%
%
\end{proof}

The result of the previous theorem implies the following bi-criteria approximation guarantee by returning $\hat{S} = \cup_{t=1}^TS_t$ instead of the uniform distribution over the $\{S_1, \ldots, S_T\}$.

\bigskip

\begin{corollary}\label{cor:bicriteria}
\textsc{HIRO} with error parameter $\e$, returns a set $\hat{S}$ of size at most $\tilde{\Q}\left(\frac{d(\log n + \log\log(1/\delta))}{\e^2}k\right)$ which, with probability at least $1-\delta$, satisfies:
$$\min_{\theta\in \Theta}f_{\theta}(\hat{S}) \geq \left(1 - \frac{1}{e}\right)\max_{S:|S|\leq k}\min_{\theta\in \Theta}f_\theta(S) - 2\e.$$
\end{corollary}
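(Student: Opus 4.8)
The plan is to leverage Theorem~\ref{thm:robust} almost directly, converting the randomized (improper) solution into a proper one by taking the union $\hat{S} = \cup_{t=1}^T S_t$ of the candidate seed sets, and then arguing that this union can only increase the robust value while keeping the cardinality under control. Two things must be established, both on the \emph{same} high-probability event on which Theorem~\ref{thm:robust} holds: a bound on $|\hat{S}|$ and the claimed lower bound on $\min_{\q\in\Theta} f_\q(\hat{S})$.

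First I would bound the size. Each $S_t$ is produced by \textsc{Greedy} under the cardinality constraint $k$, so $|S_t|\leq k$; hence $|\hat{S}| = |\cup_{t=1}^T S_t| \leq Tk$. Substituting $T \in \tilde{\Q}(d(\log n + \log\log(1/\delta))/\e^2)$ gives exactly the claimed size $\tilde{\Q}(d(\log n + \log\log(1/\delta))k/\e^2)$.

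The value argument is where I would invoke the monotonicity of the influence function. For any fixed $\q$, since $S_t \subseteq \hat{S}$ for every $t$, monotonicity of $f_\q$ yields $f_\q(\hat{S}) \geq f_\q(S_t)$ for all $t$, and therefore
$$f_\q(\hat{S}) \geq \frac{1}{T}\sum_{t=1}^T f_\q(S_t) = \E_{S\sim\mathcal{U}}[f_\q(S)].$$
Because this inequality holds for every $\q \in \Theta$, taking the minimum over $\q$ on both sides preserves it, so $\min_{\q\in\Theta} f_\q(\hat{S}) \geq \min_{\q\in\Theta}\E_{S\sim\mathcal{U}}[f_\q(S)]$. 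Chaining this with the guarantee of Theorem~\ref{thm:robust} then delivers the desired bound $\min_{\q\in\Theta} f_\q(\hat{S}) \geq (1-1/e)\max_{S:|S|\leq k}\min_{\q\in\Theta} f_\q(S) - 2\e$.

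As for obstacles: there really is no hard step here, which is precisely the point — the corollary is essentially a \emph{derandomization by union}. The only items requiring care are (i) that the single $1-\delta$ event from Theorem~\ref{thm:robust} is used throughout, so no additional failure probability is incurred; and (ii) that passing to the minimum over $\q$ is legitimate, which holds because the inequality $f_\q(\hat{S}) \geq \E_{S\sim\mathcal{U}}[f_\q(S)]$ is established pointwise for every $\q$ \emph{before} the minimization is taken. Monotonicity of $f_{\mathbf{p}}$, noted earlier, is the single structural property doing all the work, and no further properties of the MWU trajectory are needed beyond the cardinality of the individual $S_t$.
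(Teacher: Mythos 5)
Your proposal is correct and matches the paper's own (implicit) argument: the paper proves this corollary exactly by returning $\hat{S} = \cup_{t=1}^T S_t$ instead of the uniform distribution, with the size bound $|\hat{S}| \leq Tk$ and monotonicity of $f_\theta$ giving $f_\theta(\hat{S}) \geq \E_{S\sim\mathcal{U}}[f_\theta(S)]$ pointwise in $\theta$, then chaining with Theorem~\ref{thm:robust}. Your write-up simply makes explicit the details (pointwise-before-minimum ordering, reuse of the same $1-\delta$ event) that the paper leaves to the reader.
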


\section{Lower Bound}
\label{sec:lb}

In this section, we prove that a structural assumption such as the hyperparametric restriction of the IC model is vital for robust influence maximization, since otherwise we might need to sample a number of functions that is exponential in $n$. We do so by providing a strong hardness result: define the family of functions $\F_\p = \{f_\p\,|\,\p \in \mathcal{P}\}$ for some finite $\mathcal{P}$.
%where $\mathcal{P} \subseteq \{\lambda, 1-\lambda\}^m$ for some small $\lambda$.
  We prove that the problem $\max_{S: |S| \leq k}\min_{\mathbf{p} \in \mathcal{P}}f_{\mathbf{p}}(S)$ is NP-hard to approximate within any constant factor with a reasonable bicriteria approximation.
%Denote with $\F_{\mathcal{P}} = \{f_{\mathbf{p}}\,|\,\mathbf{p} \in \mathcal{P}\}$ the family of influence functions that corresponds to $\mathcal{P}$.%

This is in sharp contrast with the main result about hyperparametric robust influence maximization that we proved in Theorem~\ref{thm:robust} and Corollary~\ref{cor:bicriteria} since, assuming the hyperparametric model, $\mathcal{P}$ is always of polynomial size and hence we only need to increase our budget by a factor of $\tilde{\mathcal{O}}(\log n)$. In Theorem~\ref{thm:imposibility} we formalize the impossibility result. We provide a proof sketch, the full proof is in Appendix B.%, even if we allow the optimal solution to choose a factor of $\log n$ elements extra.

\begin{theorem}\label{thm:imposibility}
Let $G$ be a graph with $n$ nodes and $m$ edges, and $\delta, \e > 0$. There is no algorithm to find a set $\hat{S}$ of size $|\hat{S}| \leq (1-\delta)\ln |\mathcal{P}| \cdot k$ that achieves an approximation factor better than $\Q\left(\frac{1}{n^{1-\e}}\right)$ to the problem $\max_{S:|S|\leq k}\min_{\mathbf{p} \in \mathcal{P}}f_\mathbf{p}(S)$, where $\mathcal{P} \subseteq \{\lambda, 1-\lambda\}^m$, $\lambda = o(1)$, and $|\mathcal{P}| \in \Omega(\texttt{poly}(n))$, unless $P = NP$.
\end{theorem}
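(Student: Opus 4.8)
The plan is to prove hardness by reduction from the gap version of \textsc{Set Cover} at its optimal inapproximability threshold. By the classical threshold of $(1-\delta)\ln N$ for \textsc{Set Cover} (Feige~\cite{Feige98}, strengthened to NP-hardness by Dinur and Steurer), for every $\delta>0$ it is NP-hard to distinguish, given a universe $U$ with $N$ elements and sets $\mathcal{S}=\{T_1,\dots,T_M\}$, between the \emph{yes} case in which $U$ has a cover of size $k$ and the \emph{no} case in which no collection of $(1-\delta)\ln N\cdot k$ sets covers $U$. The guiding idea is to identify the universe with the family of probability vectors, setting $|\mathcal{P}|=N$, so that the forbidden budget $(1-\delta)\ln|\mathcal{P}|\cdot k$ in the statement is \emph{exactly} the \textsc{Set Cover} threshold, while the outer $\min_{\p\in\mathcal{P}}$ plays the role of the coverage constraint: the robust value is large only if the seeds simultaneously ``cover'' every scenario.

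Concretely, I would build $G$ with $M$ \emph{set nodes} $s_1,\dots,s_M$ and, for each element $e_j$, a \emph{reward block} $B_j$ of $L$ isolated nodes carrying \emph{no edges among themselves}. For every pair $(i,j)$ with $e_j\in T_i$ I add directed edges from $s_i$ to all $L$ nodes of $B_j$. The probability family is $\mathcal{P}=\{\p_1,\dots,\p_N\}$, where under $\p_j$ every edge incident to $B_j$ receives probability $1-\lambda$ and every other edge receives $\lambda$; thus $\mathcal{P}\subseteq\{\lambda,1-\lambda\}^m$. The gadget has two crucial features: (i) a single covering set node $s_i$ (with $e_j\in T_i$) activates the \emph{entire} block $B_j$ under $\p_j$ through its fan-out of $1-\lambda$ edges, contributing $\approx L$; and (ii) because $B_j$ has \emph{no internal edges}, directly seeding reward nodes activates only one node per seed. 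Feature (ii) is what stops the algorithm from circumventing coverage by seeding high-value nodes directly, and is precisely why I use independent blocks rather than a single high-value source.

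For the analysis: in the \emph{yes} case, seeding the $k$ set nodes of a cover lights, under every $\p_j$, the block $B_j$ via some covering $s_i$, so $\mathrm{OPT}_k\ge(1-\lambda)L$. In the \emph{no} case, fix any $\hat S$ with $|\hat S|\le k'=(1-\delta)\ln N\cdot k$; its set nodes induce $\le k'$ sets, which fail to cover some $e_{j^\star}$. Under $\p_{j^\star}$ the only $1-\lambda$ edges into $B_{j^\star}$ leave set nodes covering $e_{j^\star}$, of which $\hat S$ has none, so the reward nodes of $B_{j^\star}$ reached are only those directly in $\hat S$ plus a $\lambda$-leakage term. Choosing $\lambda=1/\mathrm{poly}$ small enough to bound the total leakage by $1$ yields $\min_{\p}f_\p(\hat S)\le f_{\p_{j^\star}}(\hat S)\le 2k'+1=\Q(\log N\cdot k)$. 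Hence the gap satisfies
$$\frac{\text{no-value}}{\mathrm{OPT}_k}\;\le\;\frac{\Q(\log N\cdot k)}{(1-\lambda)L}.$$
Since $n=\Theta(NL)$, I can pad $L$ to a large polynomial in $N$ and $k$ so that this ratio is at most $n^{\e-1}$, while keeping $L$ polynomial, $|\mathcal{P}|=N=\Omega(\mathrm{poly}(n))$, and $\lambda=o(1)$, exactly as required. An algorithm beating $\Q(1/n^{1-\e})$ with budget $k'$ would return, in the yes case, a solution of value $>2k'+1$, impossible in the no case; thresholding the returned value would decide gap \textsc{Set Cover} in polynomial time, contradicting $P\neq NP$.

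The step I expect to be the main obstacle is guaranteeing that the adversarial $\min$ genuinely forces a \emph{set} cover and cannot be cheaply shortcut: a naive gadget with one reward source per scenario would let the algorithm seed that source directly and defeat the reduction. The ``independent reward block with high fan-out'' design resolves this, but the remaining delicate work is the simultaneous balancing of $L$, $\lambda$, and $N=|\mathcal{P}|$ so that the $n^{1-\e}$ gap, the polynomial size of $\mathcal{P}$, the bound $\lambda=o(1)$, and the bicriteria budget $(1-\delta)\ln|\mathcal{P}|\cdot k$ all hold at once. This parameter juggling, together with invoking the sharp $(1-\delta)\ln N$ NP-hardness of \textsc{Set Cover}, is where the technical care concentrates.
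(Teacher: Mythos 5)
Your proposal is correct and follows essentially the same reduction as the paper: a bipartite gadget built from GAP SET COVER with set nodes and per-element reward blocks of polynomially padded size, one probability vector per element assigning $1-\lambda$ to the edges entering that element's block and $\lambda$ everywhere else, a yes-case value of roughly $(1-\lambda)L$ versus a no-case value of $\Q(\log N\cdot k)$ driven by an uncovered element, and the same polynomial balancing of the block size and $\lambda$ against $n^{1-\e}$. The differences are purely presentational (e.g., you bound the $\lambda$-leakage by $1$ while the paper carries the explicit $(1+m(\ell-1)\lambda)$ term), so there is nothing to flag.
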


This essentially means that if in our problem, we need to construct a cover that contains more than a polynomial number of functions then, the best bicriteria approximation we can hope for will contain significantly more than $k$ nodes.

%\begin{proof}
%
%
%Now notice that if we had an $\frac{1}{n^{1-\e}}$ approximation algorithm selecting $\Q(\e\log n k)$ elements, 
%
%Since $f$ is submodular we know that $f(S) \leq \sum_{a \in S}f(a)$ and $\E[f(a)] = \frac{1}{\e\log n \cdot k\cdot  n^{1-\e}}$ for each $a \in S$, hence the expectation of a random set of $k$ elements is $\frac{1}{\e n^{1-\e}\log n}$.
%
%\end{proof}

%\begin{theorem}
%Given a graph on $n$ nodes and a set of $\ell$ influence functions, it is $NP$-hard to find a solution $\hat{S}$ with $|\hat{S}| \leq (1-\delta)\ln\ell k$ that is a $\frac{1}{n^{1-\e}}$-approximation to the value $\max_{S:|S|\leq k}\min_{i\in[\ell]}f_i(S)$ for any $\e, \delta > 0$.
%\end{theorem}
\begin{proof}
%\tdcomment{give main points why this is hard}

%We provide a sketch of the proof here and the full reduction can be found in Appendix~\ref{app:lower_bound}. Our reduction is based on the one of \cite{kempe}. 

Our reduction is based on \cite{kempe}. The authors there prove the hardness of a different version of robust influence maximization where the objective is to approximate well the individual optima for a set of different influence functions instead of influencing as many nodes as possible in the worst case. %{$\max_{S:|S|\leq k}\rho(S)$ for $\rho(S)=\min_{f\in \Sigma} \frac{f(S)}{f(S_f)}$ for a set of influence functions $\Sigma$, and $S_f$ being the best seed for a given $f$.}
 We reduce from GAP SET COVER. The use of the gap version of the problem is to show that even if we augment the budget of nodes by a factor of roughly $\log |\mathcal{P}|$ the problem remains NP-hard.

Given an instance of gap set cover, we construct a bipartite graph on $n$ nodes, $m$ edges, and $\texttt{poly}(n)$  different probability sets that correspond to the different influence functions, such that when there is a set cover of size at most $k$ then there exists a seed set $S$ for which all the influence functions have high value, while when there is no set cover of size $\log n \cdot k$, then there is at least one influence function that has low value for any seed set, even if we allow sets of size $\log n \cdot k$. The asymptotic difference between the values of the objective functions, with and without the cover set, enables the decision of the gap set cover.\end{proof}

\section{Experiments}
\label{sec:experiments}

\begin{figure*}[h!t!]
\begin{center}
\begin{tabular} {cccc}
  \includegraphics[width=1.4in, trim={0cm 0 1.64cm 0}, clip]{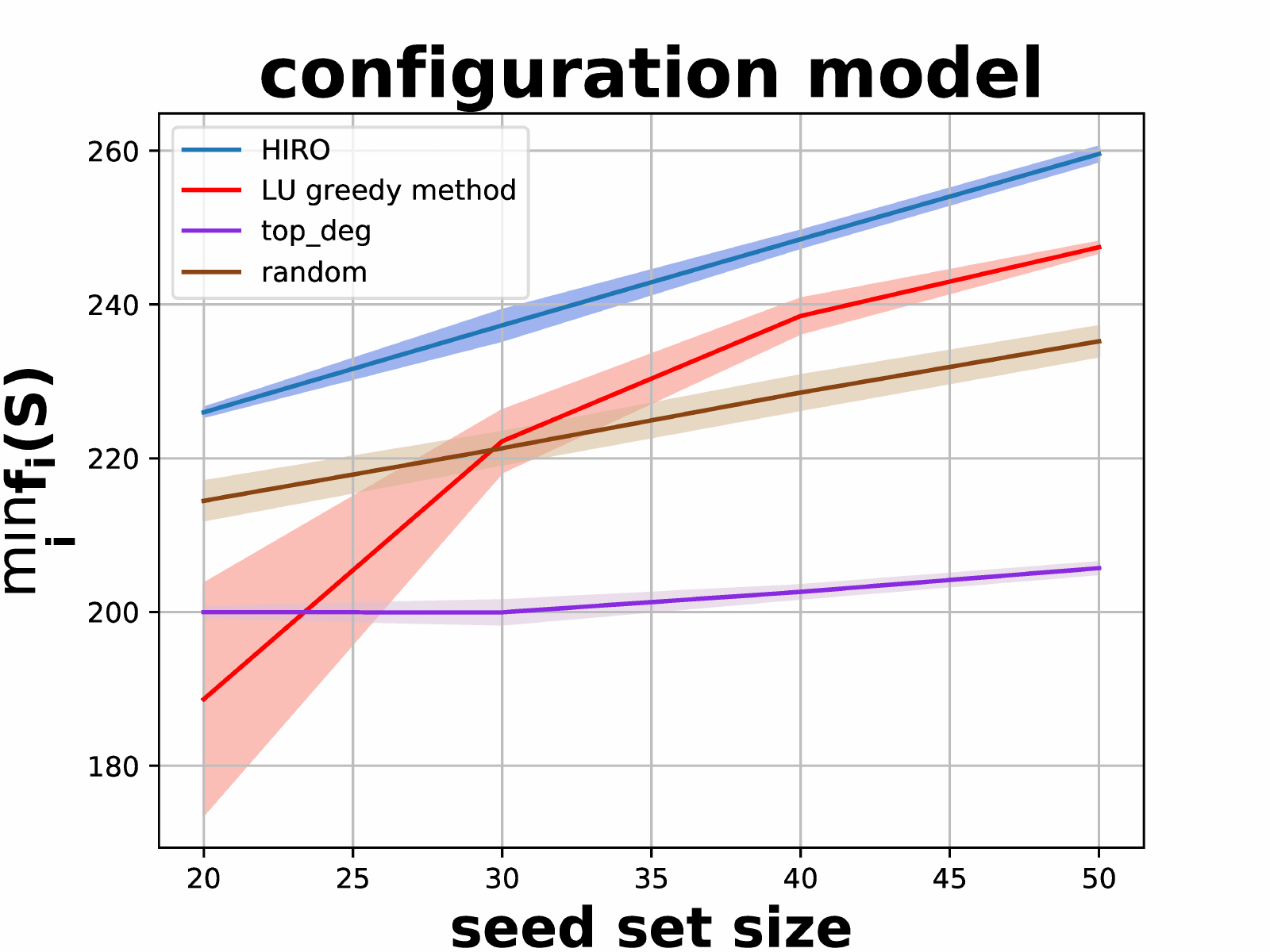} & 
  \includegraphics[width=1.4in, trim={0cm 0 1.64cm 0}, clip]{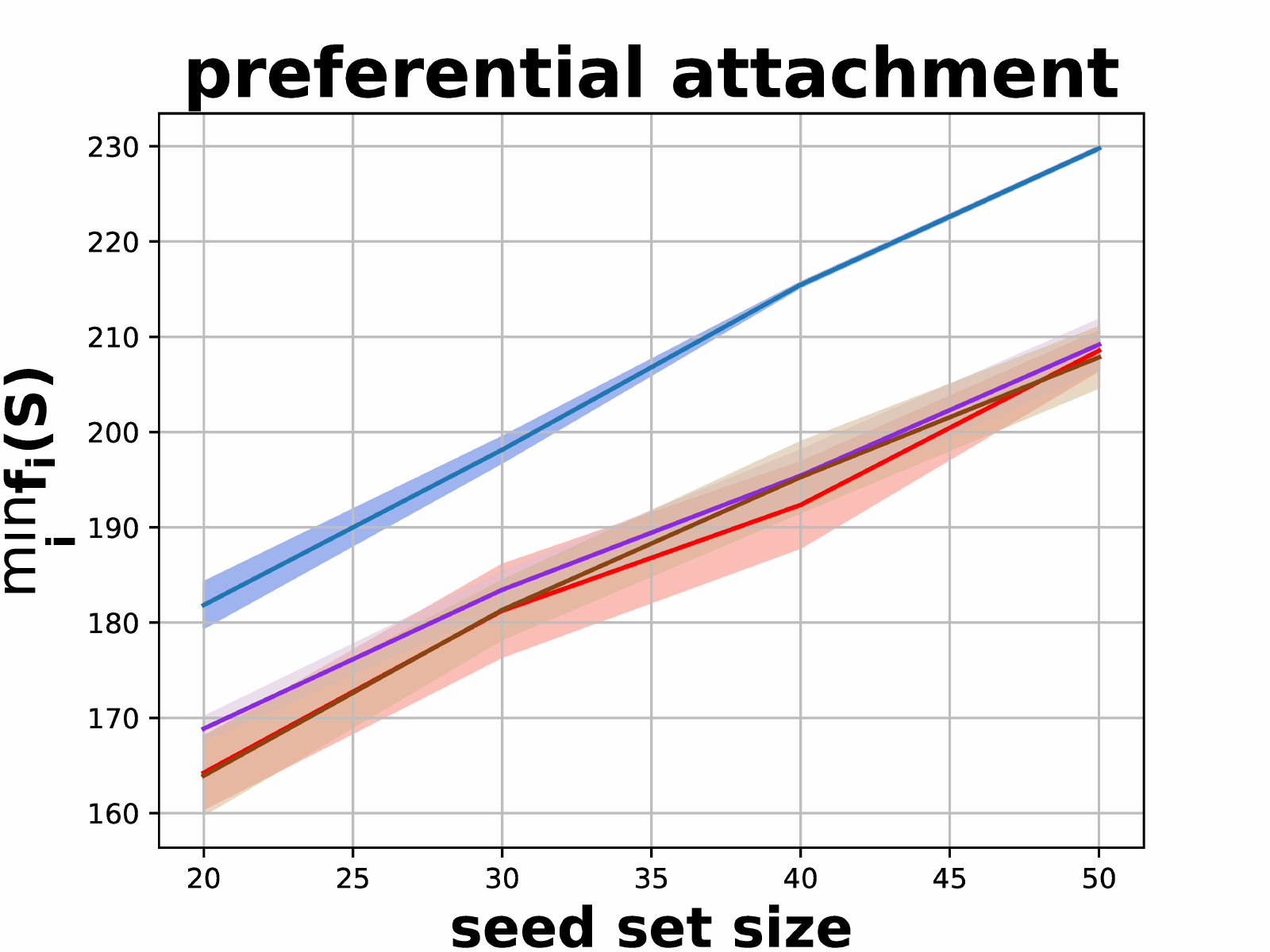} &
  \includegraphics[width=1.4in, trim={0cm 0 1.64cm 0}, clip]{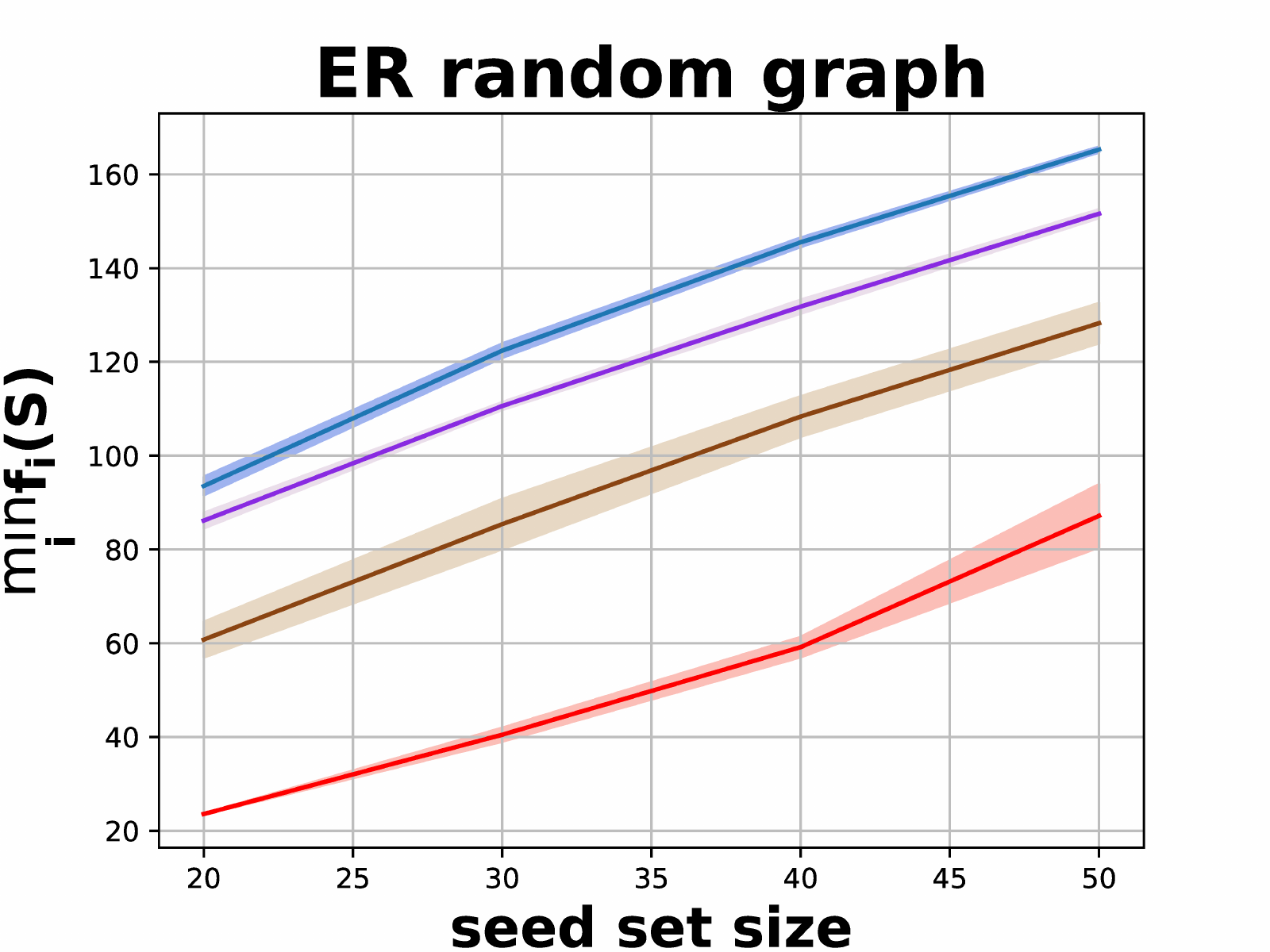} & 
  \includegraphics[width=1.4in, trim={0cm 0 1.64cm 0}, clip]{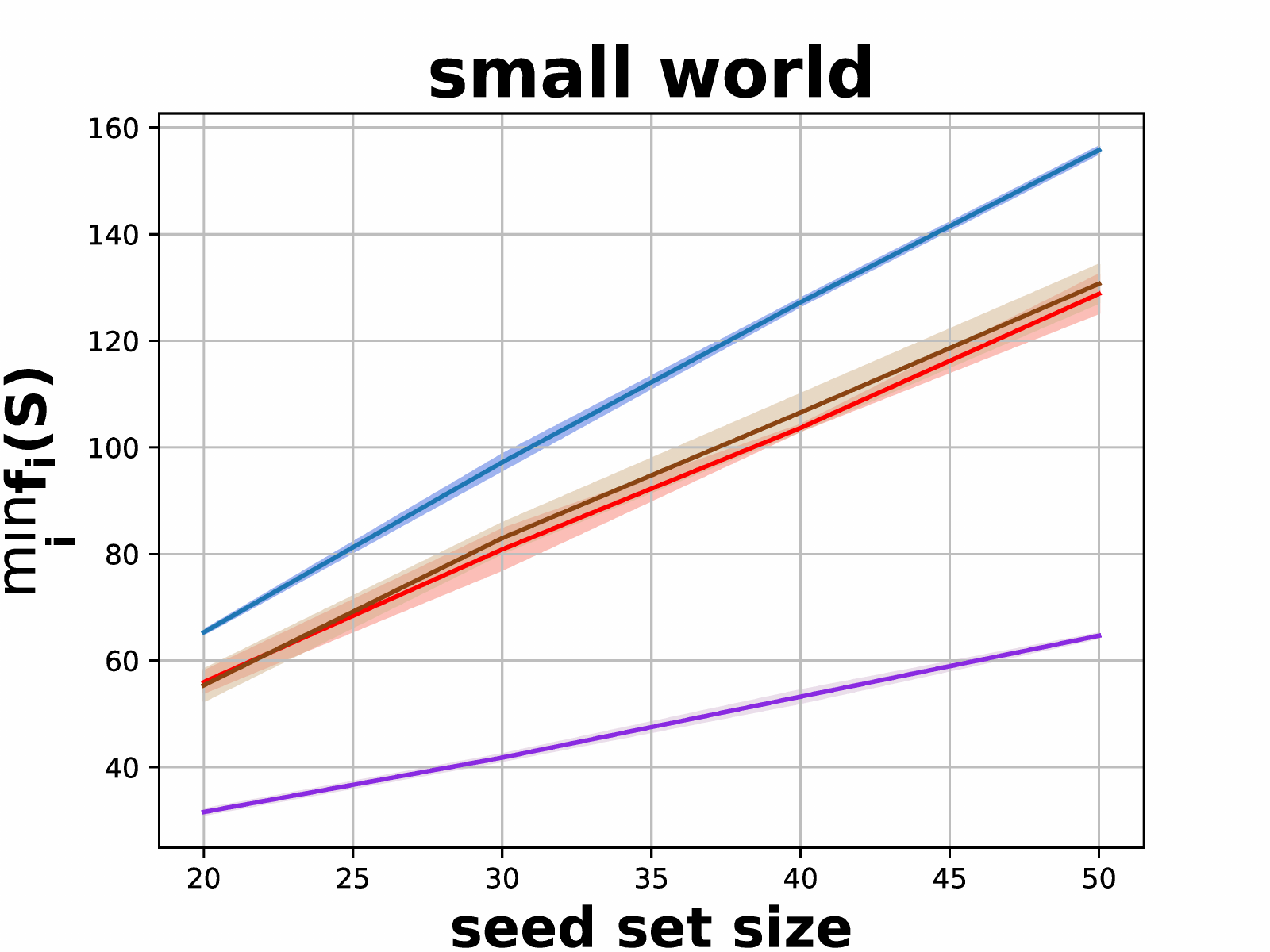} \\
  \end{tabular}
\end{center}
  \caption{Comparison to benchmark solutions of the robust problem.}
  \label{fig:ex3}
\end{figure*}

\begin{figure*}[h!t!]
\begin{center}
\begin{tabular} {cccc}
  \includegraphics[width=1.4in, trim={0 0 1.64cm 0}, clip]{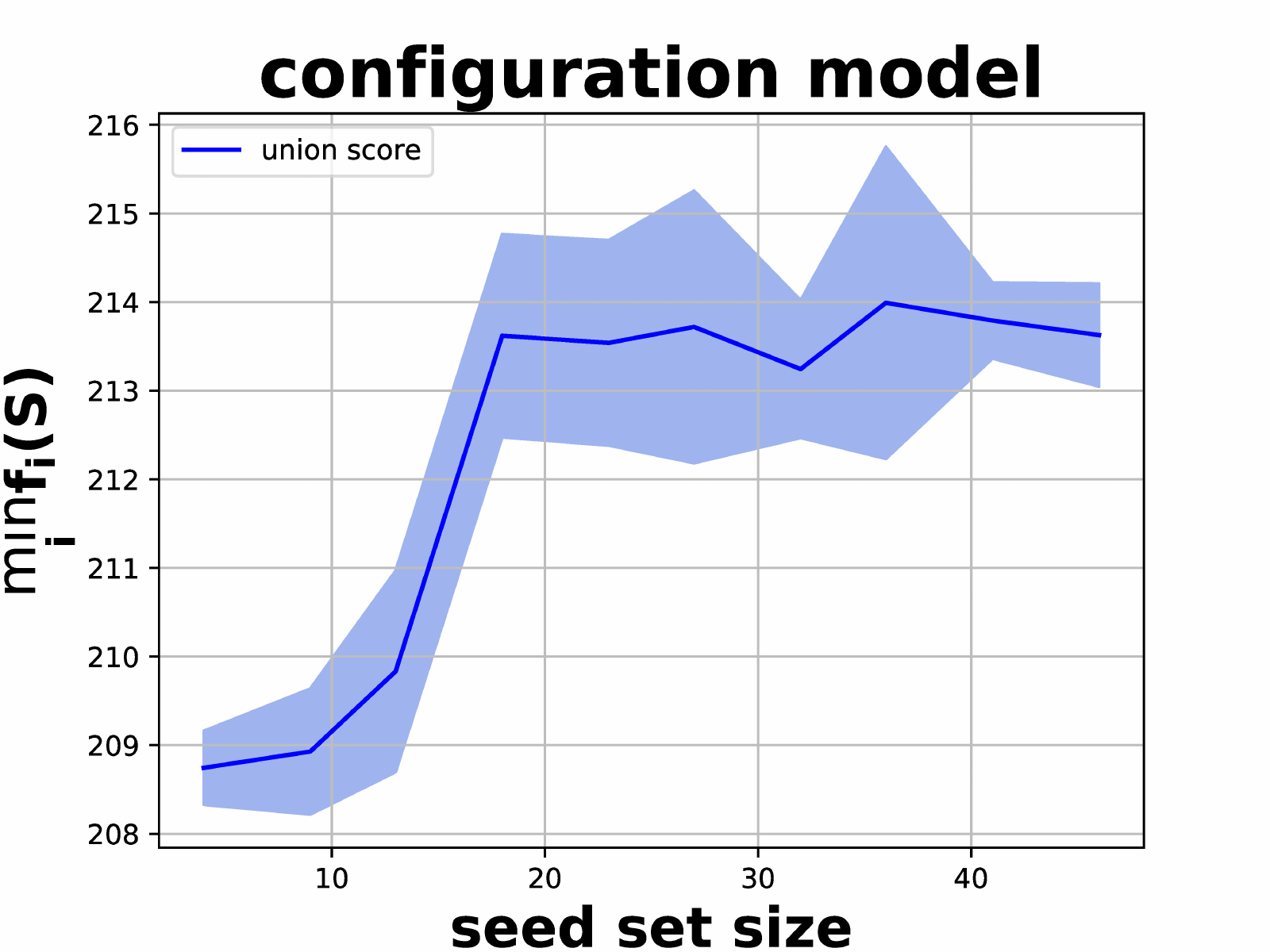} & 
  \includegraphics[width=1.4in, trim={0 0 1.64cm 0}, clip]{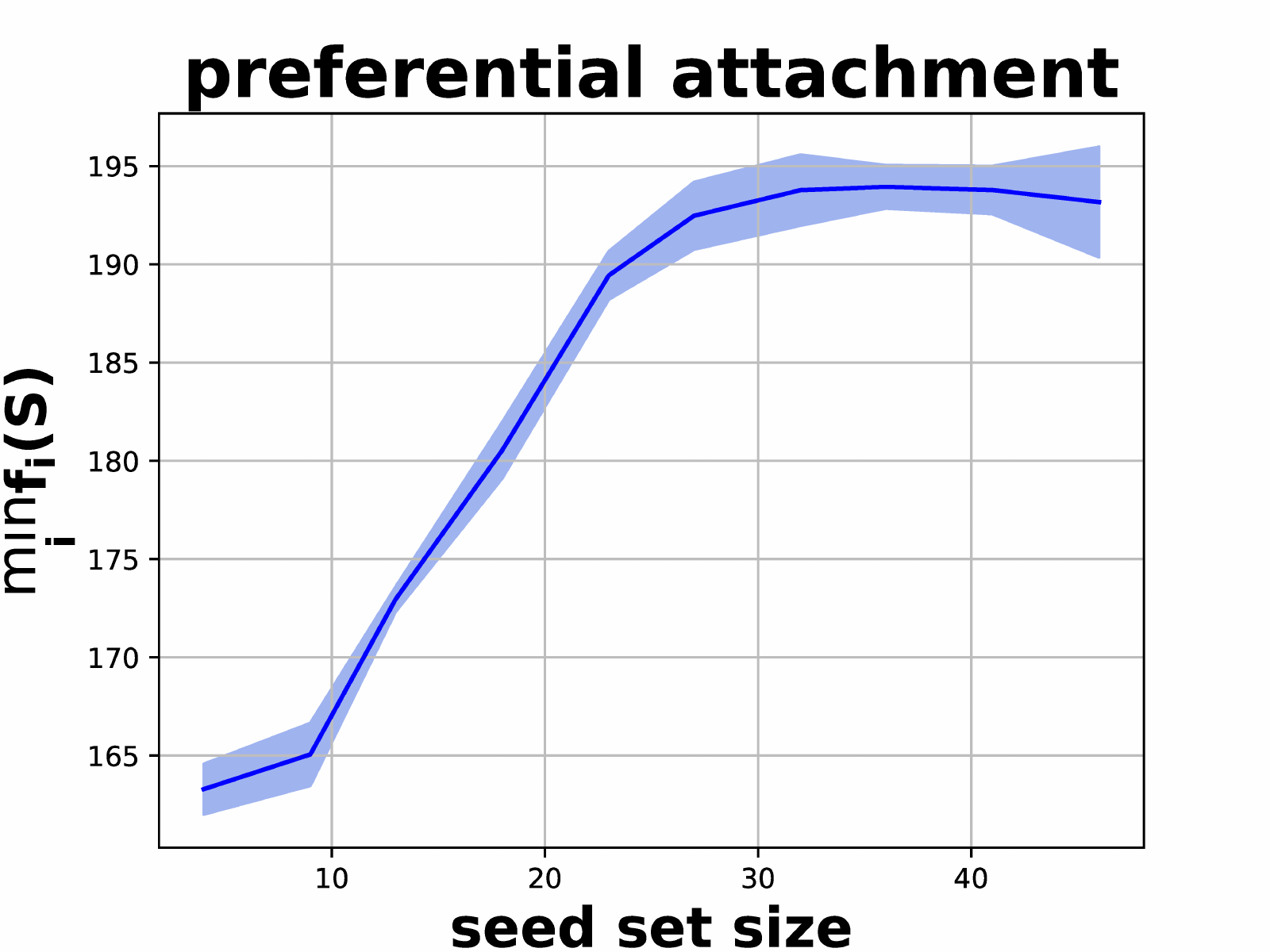} &
  \includegraphics[width=1.4in, trim={0 0 1.64cm 0}, clip]{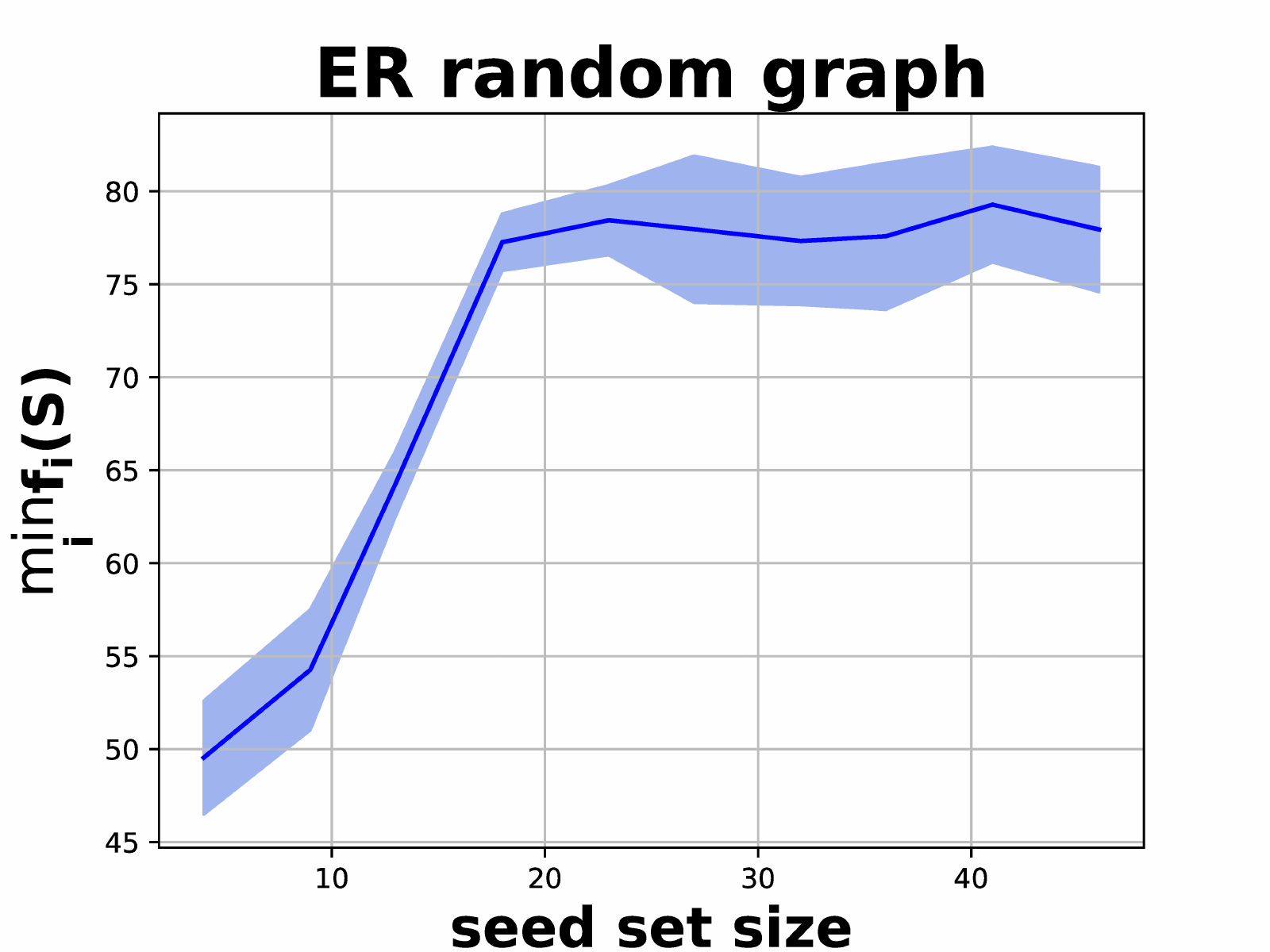} & 
  \includegraphics[width=1.4in, trim={0 0 1.64cm 0}, clip]{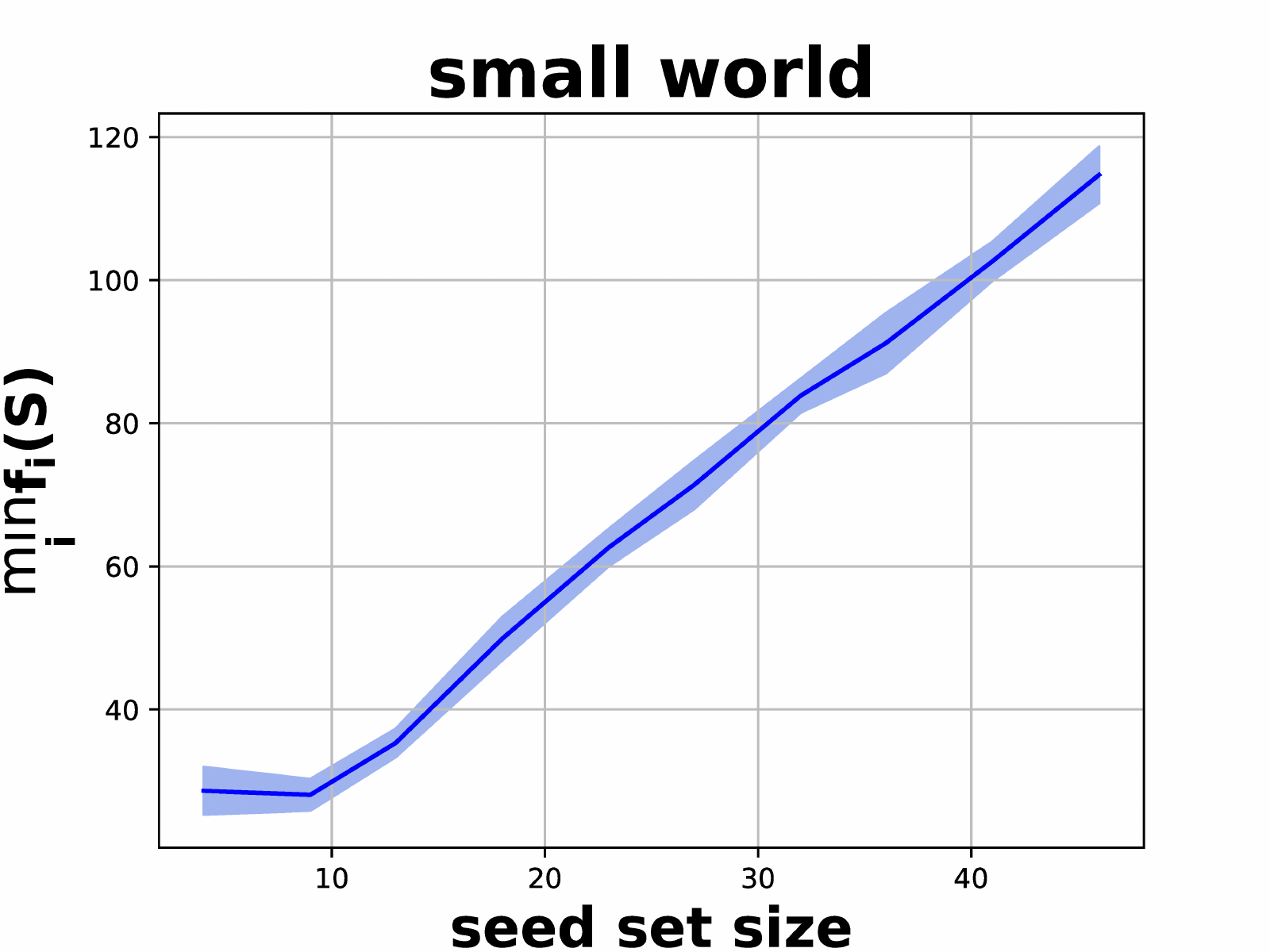} \\
  \end{tabular}
\end{center}
  \caption{Illustration of the gap between a regular seed set size and the bi-criteria augmented budget.}
  \label{fig:ex4}
\end{figure*}

%\begin{figure*}[!ht]
%\begin{center}
%\begin{tabular} {cccc}
%  \includegraphics[width=3.0in]{figures/random_sets_histogram.eps}
%%  \includegraphics[width=1.45in]{figures_with_stdev/kr_sd} &
%%  \includegraphics[width=1.45in]{figures_with_stdev/conf_sd} &
%%  \includegraphics[width=1.45in]{figures_with_stdev/erdos_sd} \\
%  \end{tabular}
%\end{center}
%  \caption{Empirical Sample Complexity}
%  \label{fig:no_noise}
%\end{figure*}
%\subsection{Expreimental setup}\label{sec:synthetic}
%

%
%\begin{figure*}[!h!b]
%\begin{center}
%\begin{tabular} {cccc}
%  \includegraphics[width=1.5in]{figures/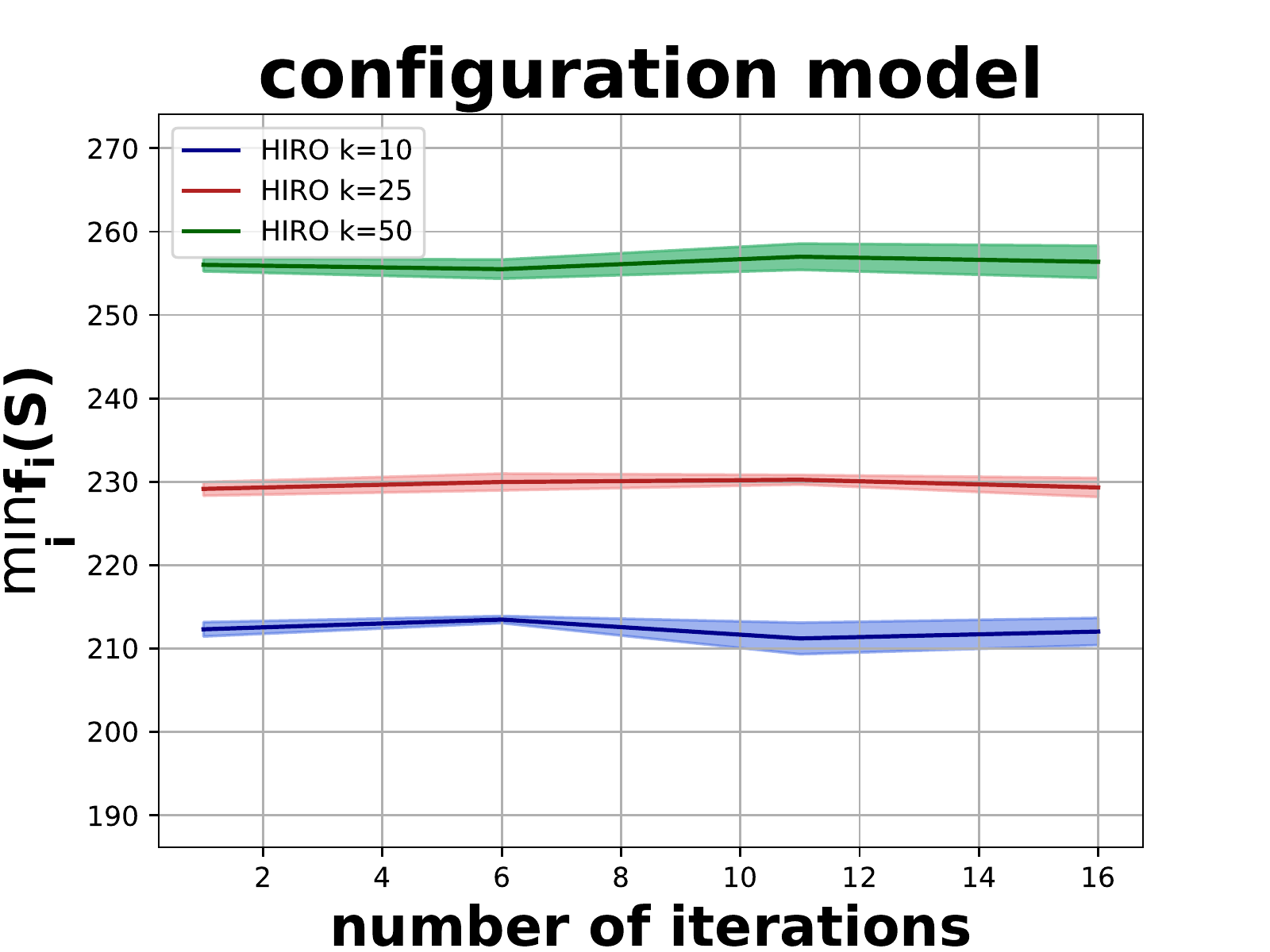} & 
%  \includegraphics[width=1.5in]{figures/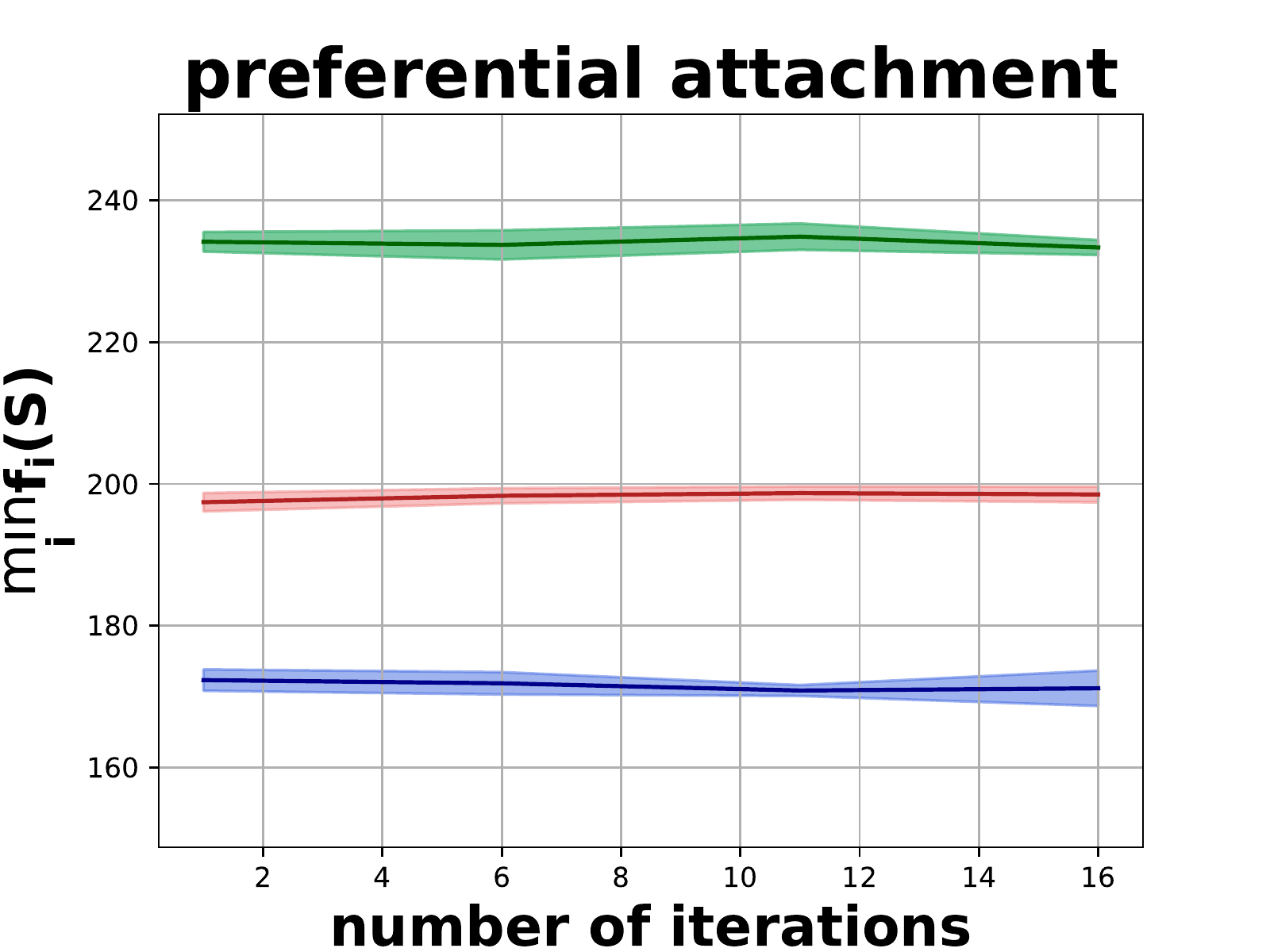} & 
%  \includegraphics[width=1.5in]{figures/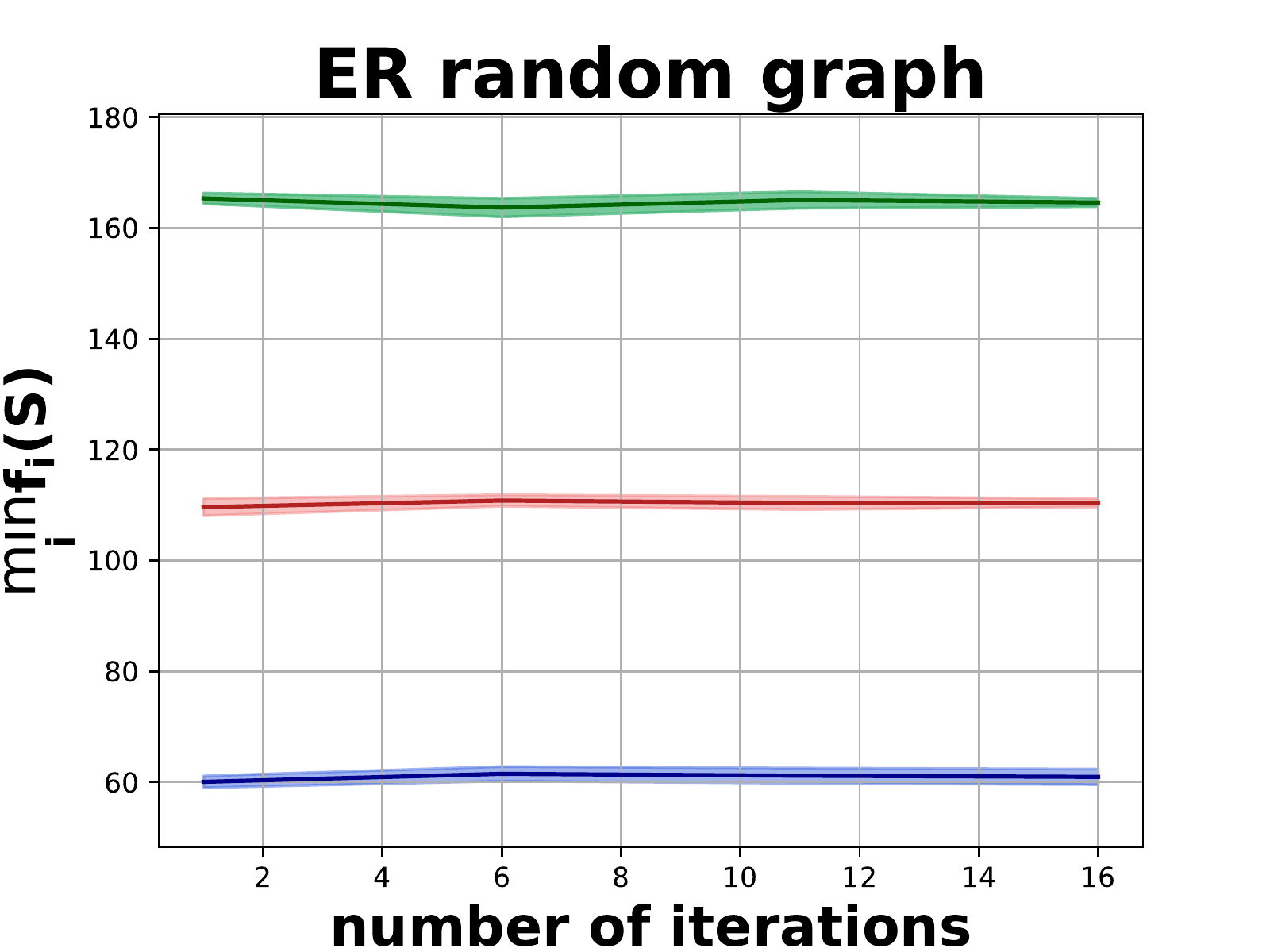} & 
%  \includegraphics[width=1.5in]{figures/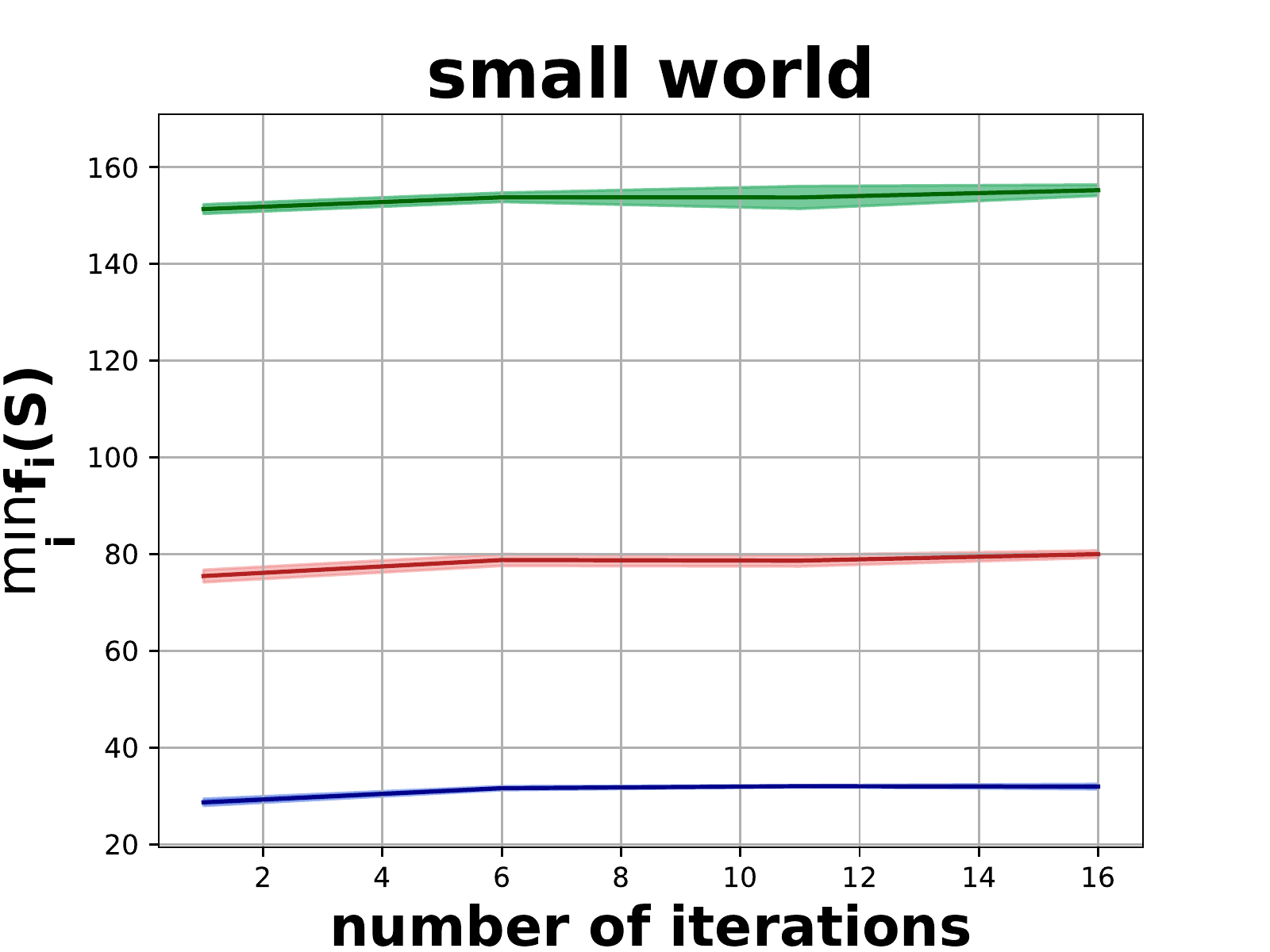} 
%  \end{tabular}
%\end{center}
%  \caption{Benchmark comparisson}
%  \label{fig:benchmark}
%\end{figure*}

To measure the empirical performance of the robust hyperparametric approach we conducted four sets of experiments: (1) First, we examine how many functions are empirically required for the sufficient covering of $\mathcal{F}=\{f_{\theta} \ | \ \theta \in \Theta \}$ and observe that in practice it is far smaller than the theoretical worst-case upper bound of Corollary \ref{cor:cover}. (2) We examine the rate of convergence of HIRO to the robust solution and show substantially faster convergence than in Theorem \ref{thm:robust}. (3) We benchmark the performance of HIRO against other methods for robust influence maximization and observed that it consistently outperforms all previous methods. (4) We measure how well a seed set of size $k$ performs compared to a seed set of slightly augmented budget according to the bi-criteria approximation guarantee (Corollary~\ref{cor:bicriteria}).%We note that returning the last iteration of the HIRO algorithm rather than randomizing over the solutions was a fast, comparable in efficiency, and useful benchmark. It is important to remark that although this method works well in practice it lacks the theoretical guarantee. % 
% trying to illustrate the gap between the upper and lower bound we measured the performance%

%in comparison to various benchmarks: current state of the art methods that were tweaked to solve the problem, simple heuristics and the Monte-Carlo method of generating a large set of random solutions. We show that our methods %  in some cases marginally so and significantly in others. 

%This discrepancy could be best explained by a much lower Lipschitz constant for real graphs. 
\paragraph{Graphs.} We generated four different synthetic networks using standard random graph models to analyze the impact of topological variations among different social networks. All networks were generated with $n=500$ vertices. We used the Barab\'{a}si-Albert preferential attachment model where each new node is connected to 4 (preferably high-degree) existing ones, the Watts-Strogatz small world model where each node is connected to 5 nodes in the ring topology and the probability of rewiring an edge is $3/n$, Erd\"{o}s-R\'enyi random graph model with edge-construction probability $p=3/n$ and the configuration model with a power law degree distribution and $\alpha=2$. For a more detailed description of these models please refer to Appendix C.

%as a performance benchmark that approximates the robustness of the continuous space, as seen in the theoretical part this is a principled approach. 
%Subsequently, we use each value of $\theta$ to create $m$ functions that will serve as a proxy for the global parameter $\Theta$. 

\smallskip
\noindent\textbf{Hyperparameric model.}
We used the sigmoid function as the hyperparameteric model to determine the diffusion probabilities, i.e. $h(\theta^\top x_e) = \frac{1}{1+\exp(-\theta^\top x_e)}$ as in~\cite{icml18}.  We generated $d$ random features in $[-1,1]$ for every edge. We used $d=5$, however our results are consistent across a large range of dimensions $d$ and feature-generating techniques, such as normal or uniform distributions over the unit hyper-cube  $[-1,1]^d$ and it's discrete analog $\{-1,1\}^d$. %We use the sigmoid as the generative function. Note that the sigmoid function evaluated at $\max_{\theta} \theta^\top f_e$ where $f_e$ is the vector of features of edge $e$ will be exponentially close to 1 ($\approx$ 1-4.5*10e-6) and the same is true for minimal $\theta$ and probability 0. Every experiment runs for 5-10 repetitions in order to account for variability.
We sampled $\Theta_\epsilon = \{\theta_1,\ldots,\theta_l\}$ from $\Theta = [-1,1]^d$ and generated the family of influence functions $\F_\e = \{f_{i} \ | \ \theta_i \in \Theta_\epsilon \}$ for $l=20$. In addition we set $T=10$ HIRO iterations with the exception of Experiments 1 and 2, where $l$ and $T$ are the free variable, respectively.

\noindent\textbf{Benchmarks.}  We benchmark the performance of HIRO with respect to the following algorithms:%We compared HIRO to the following algorithms:
\begin{itemize*}
\item \emph{Random Seed:} We select $k$ nodes u.a.r.;  to account for variance, we average over the solutions from 100 trials.
\item \emph{Top k Degree nodes:} In this benchmark we chose the $k$ highest-degree nodes in the graph.
\item \emph{Random Greedy:} In robust optimization problems, a typical approach is to randomize over the best strategies against any possible influence function. Consequently, this method runs $\textsc{Greedy}$ algorithm on every function $f_i$ for $\theta_i \in \Theta_\varepsilon$ and  chooses one of the outputs uniformly at random.

\item \emph{ $\textsc{LUGreedy}$:} We compare ourselves against Algorithm 2 in ~\cite{wei} and refer to is as  $\textsc{LUGreedy}$. $\textsc{LUGreedy}$ is an algorithm specially developed for robust influence maximization.
 %and is similar to the algorithm in~\cite{kempe}.
 It is oblivious to the structure of the model and only accounts for the confidence interval of each edge. $\textsc{LUGreedy}$ produces two solutions,  by running the $\textsc{Greedy}$ algorithm twice - over both the lower and upper boundaries of the confidence intervals. Then, the algorithm chooses the best solution out of the two, assuming the lower boundaries as true probabilities. $\textsc{LUGreedy}$  achieves the best approximation for the robust ratio objective, mentioned in the introduction.
 
 %to another version of robust influence maximization where one seeks to find a solution that is as close as possible to the optimal solution of any possible influence function instead of finding a solution that influences as many nodes as possible in the worst-case.
 
% similar problem - maximize the ratio $ \min\limits_{f}\frac{f(S)}{f(S_f)}$ where $S_f$ is \tdcomment{the best possible seed set of size $k$ for function f.}
%
%where the objective is to approximate well the individual optimums for a set of different influence functions instead of influencing as many nodes as possible in the worst case.

\end{itemize*}

\subsection{Experimental results}\label{sec:results}
We perform 50 trials for each experiment and plot mean and standard deviation in the figures. 

\noindent\textbf{Experiment 1.} (Figure \ref{fig:ex1}, Appendix~\ref{app:experiments})
The goal of this experiment is to understand the sample complexity of the hyperparameter, i.e. how many functions do we need to sample to approximate the robust solution accurately. We sample $l=50$ values from $\Theta$ as a benchmark (can be seen as a test/validation set). In that way we create a cover $\mathcal{F}_\varepsilon$ that serves as a proxy for $\F=\{f_\theta|\ \theta \in \Theta\}$. We plot $\min\limits_{1\le i\le l} f_i(S_r)$ where $S_r$ is the result of running HIRO over $r \in \{1, 10, 20, 30, 40, 50\}$ functions. We plot three such trials with different seed sizes $k \in \{10, 25, 50\}$. Figure \ref{fig:ex1}, deferred to the appendix, demonstrates that in practice sample complexity plateaus, even though theoretically increase in the number of functions should bring about an increase in the value of the robust solution. %One possible explanation is that in many practical examples the Lipschitz constant $L$ is much smaller and could be seen as a constant \tdcomment{give a better/more elaborate explanation}%.

\noindent\textbf{Experiment 2.} (Figure \ref{fig:ex2},  Appendix~\ref{app:experiments})
We show that HIRO converges rapidly in practice, that is, even after a constant number of iterations, HIRO achieves great results. We run the experiment three times, for different values of $k\in\{10, 25, 50\}$. For each trial, we run HIRO for a $T\in \{1, 5, 10, 15\}$ number of iterations. Figure \ref{fig:ex2}, found in the appendix, illustrates this inquiry. Notwithstanding the small world model, which has slow growth, other plots indeed seem to converge quickly for all values of k. %Note that the variance appears to decrease with the number of iterations.

\noindent\textbf{Experiment 3.} (Figure \ref{fig:ex3})
This experiment compares HIRO with the benchmarks and illustrates consistency of our algorithm. We plot the value of a benchmark as a function of the size of the seed set. We can see that across all generative models and proposed benchmarks, even after few iterations, our algorithm performs at the very top, with the lowest variance. It is readily seen that other benchmarks are competitive when the seed set is small, but as the seed set grows so does the gap in performance of the best heuristic.  % A key remark is that the last iteration of HIRO has close to optimal results, however it is lacking the theoretical bound. For many iterative algorithms this has been a point of discussion. A known example is the difference between Gradient Descent that has last iteration guarantee as a contrast to Stochastic Gradient Descent which is promised to converge for the average of all iterations. %

\noindent\textbf{Experiment 4.} (Figure \ref{fig:ex4})
We evaluate the gap in the value of the robust solution between a seed set of size $k$ and a seed set of size $\beta\cdot k\log n$. %The hardness results assure that the objective function under a seed of size $(1-\delta) log n k$ will not exceed $1-(1/e)$ approximation of $\max\limits_{S:|S|\le k}\min_i f_i(S)$ \tdcomment{check} in contrast to seed set of size $k\log n$. % 
Recall that the HIRO algorithm chooses at random among $T$ solutions a seed with size $k$. Instead, we take a union of these solutions and return a size-$k'$ subset of the union, where $\beta \in (0,1]$. Here we choose $k=10$ and we report the results in Figure~\ref{fig:ex4}.
%We plot the objective as a function of $k'=\alpha k\log(n)$ where $\alpha \in [0,1]$ with initial seed size $k=10$. The polarity can be vividly seen in figure \ref{fig:ex4}.

\section{Conclusion}
\label{sec:conclusion}

In this paper, we proposed a new formulation of robust influence maximization by utilizing a very broad class of hyperparametric models. We provided an efficient reduction from continuous to discrete robust influence maximization and an optimal and computationally tractable algorithm for the problem in terms of bi-criteria approximation. We empirically assessed its performance and found that it consistently surpasses state-of-the-art methods.%, while converging significantly faster than its theoretical worst-case guarantees.

%a\tdcomment{Reducing Continuous to Discrete Robust Influence Maximization using Hyperparametrization/Generalized Linear Models}

\newpage
\setlength{\bibsep}{5pt}
\bibliography{bibliography}

\begin{thebibliography}{20}
\providecommand{\natexlab}[1]{#1}
\providecommand{\url}[1]{\texttt{#1}}
\expandafter\ifx\csname urlstyle\endcsname\relax
  \providecommand{\doi}[1]{doi: #1}\else
  \providecommand{\doi}{doi: \begingroup \urlstyle{rm}\Url}\fi

\bibitem[Adiga et~al.(2013)Adiga, Kuhlman, Mortveit, and Kumar]{AKMKV13}
Adiga, A., Kuhlman, C., Mortveit, H., and Kumar, A.
\newblock Sensitivity of diffusion dynamics to network uncertainty.
\newblock In \emph{AAAI Conf. on Artificial Intelligence}, 2013.

\bibitem[Anari et~al.(2019)Anari, Haghtalab, Naor, Pokutta, Singh, and
  Torricok]{nika}
Anari, N., Haghtalab, N., Naor, J.~S., Pokutta, S., Singh, M., and Torricok, A.
\newblock Structured robust submodular maximization: offline and online
  algorithms.
\newblock In \emph{The 22nd International Conference on Artificial Intelligence
  and Statistics (AISTATS)}, 2019.

\bibitem[Aral \& Dhillon(2018)Aral and Dhillon]{reviewer1}
Aral, L. and Dhillon, P.
\newblock Social influence maximization under empirical influence models.
\newblock In \emph{Nature Human Behavior}, pp.\  375--382, 2018.

\bibitem[Chen et~al.(2017)Chen, Lucier, Singer, and Syrgkanis]{nips17}
Chen, R., Lucier, B., Singer, Y., and Syrgkanis, V.
\newblock Robust optimization for non-convex objectives.
\newblock In \emph{Annual Conference on Neural Information Processing Systems
  (NIPS)}, 2017.

\bibitem[Chen et~al.(2016{\natexlab{a}})Chen, Lin, Tan, Zhao, and Zhou]{wei}
Chen, W., Lin, T., Tan, Z., Zhao, M., and Zhou, X.
\newblock Robust influence maximization.
\newblock In \emph{Proceedings of the 22nd ACM SIGKDD international conference
  on Knowledge discovery and data mining}, 2016{\natexlab{a}}.

\bibitem[Chen et~al.(2016{\natexlab{b}})Chen, Wang, and Yuan]{wei2}
Chen, W., Wang, Y., and Yuan, Y.
\newblock Combinatorial multi-armed bandit and its extension to
  probabilistically triggered arms.
\newblock In \emph{Journal of Machine Learning Research}, volume~17, pp.\
  1--33, 2016{\natexlab{b}}.

\bibitem[Dinur \& Steurer(2014)Dinur and Steurer]{dinur}
Dinur, I. and Steurer, D.
\newblock Analytical approach to parallel repetition.
\newblock In \emph{In Proceedings 45th ACM Symposium on Theory of Computing
  (STOC),}, pp.\  624--633, 2014.

\bibitem[Domingos \& Richardson(2001)Domingos and
  Richardson]{domingos2001mining}
Domingos, P. and Richardson, M.
\newblock Mining the network value of customers.
\newblock In \emph{Proceedings of the seventh ACM SIGKDD international
  conference on Knowledge discovery and data mining}, pp.\  57--66. ACM, 2001.

\bibitem[Feige(1998)]{Feige98}
Feige, U.
\newblock A threshold of ln n for approximating set cover.
\newblock In \emph{Journal of the ACM (JACM)}, volume 45(4), pp.\  634?652,
  1998.

\bibitem[Goyal et~al.(2011)Goyal, Bonchi, and Lakshmanan]{GBL11}
Goyal, A., Bonchi, F., and Lakshmanan, V.
\newblock A data-based approach to social influence maximization.
\newblock In \emph{Proc. VLDB Endowment, 5(1)}, pp.\  73--84, 2011.

\bibitem[He \& Kempe(2015)He and Kempe]{HK15}
He, X. and Kempe, D.
\newblock Stability of influence maximization.
\newblock \emph{CoRR}, abs/1501.04579, 2015.
\newblock URL \url{http://arxiv.org/abs/1501.04579}.

\bibitem[He \& Kempe(2016)He and Kempe]{kempe}
He, X. and Kempe, D.
\newblock Robust influence maximization.
\newblock In \emph{Proceedings of the 22nd ACM SIGKDD international conference
  on Knowledge discovery and data mining}, 2016.

\bibitem[Kalimeris et~al.(2018)Kalimeris, Singer, Subbian, and
  Weinsberg]{icml18}
Kalimeris, D., Singer, Y., Subbian, K., and Weinsberg, U.
\newblock Learning diffusion using hyperparameters.
\newblock In \emph{International Conference on Machine Learning (ICML)}, 2018.

\bibitem[Karp(1972)]{set_cover}
Karp, R.~M.
\newblock Reducibility among combinatorial problems.
\newblock In \emph{In R. E. Miller and J. W. Thatcher (editors). Complexity of
  Computer Computations. New York: Plenum.}, pp.\  85--103, 1972.

\bibitem[Kempe et~al.(2003)Kempe, Kleinberg, and Tardos]{KKT03}
Kempe, D., Kleinberg, J., and Tardos, {\'E}.
\newblock Maximizing the spread of influence through a social network.
\newblock In \emph{Proceedings of the ninth ACM SIGKDD international conference
  on Knowledge discovery and data mining}, pp.\  137--146. ACM, 2003.

\bibitem[Krause et~al.(2008)Krause, McMahan, Guestrin, and Gupta]{krause}
Krause, A., McMahan, H.~B., Guestrin, C., and Gupta, A.
\newblock Robust submodular observation selection.
\newblock In \emph{Journal of Machine Learning Research}, volume~9, pp.\
  2761--2801, 2008.

\bibitem[Nemhauser et~al.(1978)Nemhauser, Wolsey, and Fisher]{NWF78}
Nemhauser, G.~L., Wolsey, L.~A., and Fisher, M.~L.
\newblock An analysis of approximations for maximizing submodular set
  functions.
\newblock In \emph{Mathematical Programming 14(1)}, pp.\  265--294, 1978.

\bibitem[Ohsaka \& Yoshida(2017)Ohsaka and Yoshida]{OY17}
Ohsaka, N. and Yoshida, Y.
\newblock Portfolio optimization for influence spread.
\newblock In \emph{Proceedings of the 26th International Conference on World
  Wide Web, {WWW} 2017, Perth, Australia, April 3-7, 2017}, pp.\  977--985,
  2017.
\newblock \doi{10.1145/3038912.3052628}.
\newblock URL \url{https://doi.org/10.1145/3038912.3052628}.

\bibitem[Vaswani et~al.(2017)Vaswani, Kveton, Wen, Ghavamzadeh, Lakshmanan, and
  Schmidt]{bandits17}
Vaswani, S., Kveton, B., Wen, Z., Ghavamzadeh, M., Lakshmanan, L. V.~S., and
  Schmidt, M.
\newblock Model-independent online learning for influence maximization.
\newblock In \emph{International Conference on Machine Learning (ICML)}, 2017.

\bibitem[Wen et~al.(2015)Wen, Kveton, and Ashkan]{bandits15}
Wen, Z., Kveton, B., and Ashkan, A.
\newblock Efficient learning in large-scale combinatorial semi-bandits.
\newblock In \emph{International Conference on Machine Learning (ICML 2015)},
  pp.\  1113--1122, 2015.

\end{thebibliography}
\bibliographystyle{icml2019}

\newpage
\appendix

%
%\section{Discussion}
%\label{app:discussion}
%
%

\section{Omitted Proofs}
\label{app:proofs}

\subsection{Proof of Lemma~\ref{lemma:ratio_tight}}

\begin{proof}
Denote the solutions to the two different objectives as follows: $\hat{S}_r = \arg\max_{S:|S| \leq k}\min_{\p \in \mathcal{P}}\frac{f_{\p}(S)}{f_{\p}(S_{\p}^*)}$ and $\hat{S}_v = \arg\max_{S:|S| \leq k}\min_{\p \in \mathcal{P}}f_{\p}(S)$.

We will prove the lemma by contradiction. Specifically, let's assume that there exists a set of influence functions $\mathcal{P}$ for which $\min_{\p}f_{\p}(\hat{S}_r) < \frac{1}{\sqrt{n}}\min_{\p} f_{\p}(\hat{S}_v)$, i.e. for this $\mathcal{P}$, the solution for the robust ratio objective is suboptimal with respect to the total number of nodes influenced by a factor greater than $\sqrt{n}$.

To ease the notation let us denote with $f_{r}$ the function that achieves

Hence it holds:
\begin{multline}\label{eq:eq1}\sqrt{n} \cdot f_r(\hat{S}_r) = \sqrt{n} \cdot \min_{\p}f_{\p}(\hat{S}_r)\\ < \min_{\p} f_{\p}(\hat{S}_v) = f_v(\hat{S}_v) \leq f_r(\hat{S}_v)\end{multline}

where the last inequality is due to the minimality of $f_v$. Let us denote with $f_m$ the function that has the minimum ratio for $\hat{S}_v$. That is, $\frac{f_m(\hat{S}_v)}{f_m(S^*_m)} = \min_{\p}\frac{f_{\p}(\hat{S}_v)}{f_{\p}(S_{\p}^*)}$. Then,

\begin{multline}\label{eq:eq2}\frac{1}{\sqrt{n}} > \frac{f_r(\hat{S}_r)}{f_r(\hat{S}_v)} \geq \frac{f_r(\hat{S}_r)}{f_r(S_r^*)}\\
= \max_S\min_{\p}\frac{f_{\p}(S)}{f_{\p}(S_{\p}^*)} \geq \min_{\p}\frac{f_{\p}(\hat{S}_v)}{f_{\p}(S_{\p}^*)} = \frac{f_m(\hat{S}_v)}{f_m(S^*_m)}\end{multline}

where the last inequality holds due to the maximality of $\hat{S}_r$. Now we can prove a contradiction as follows:

$$f_m(S^*_m) > \sqrt{n} \cdot f_m(\hat{S}_v) \geq \sqrt{n} \cdot f_v(\hat{S}_v) $$
$$> n\cdot f_r(\hat{S}_r) \geq n$$

The first inequality is due to (\ref{eq:eq2}), the second is due to the fact that $f_v(\hat{S}_v) = \arg\min_{\p}f_{\p}(\hat{S}_v)$, while the third is from (\ref{eq:eq1}). Finally, since $|\hat{S}_r| \geq 1$ the influence function is also at least 1 (at least all the nodes in $\hat{S}_r$ get influenced).

Now notice that the influence of any set of nodes cannot be more than $n$ and as a result we have a contradiction. Thus, $\min_{\p}f_{\p}(\hat{S}_r) \geq \frac{1}{\sqrt{n}}\min_{\p} f_{\p}(\hat{S}_v)$.

The graph in Figure~\ref{fig:ratio} shows that there exist a set $\mathcal{P}$ for which $\min_{\p}f_{\p}(\hat{S}_r) = \Omega\left(\frac{1}{\sqrt{n}}\right)\min_{\p} f_{\p}(\hat{S}_v)$ which concludes the proof.

\end{proof}

\subsection{Proof of Lemma~\ref{lemma:tight_example}}

Consider a cycle on $n$ nodes, connected with edges of diffusion probability $1-\lambda$, and an additional center node $v^\star$ that is connected to all the nodes of the cycle. Notice that the number of edges is $m = 2n$.  To consider the Lipschitzness of the influence function on this graph we consider the change in the influence of $v^\star$ in case that the probabilities connecting it to the cycle are all $\lambda$ and the case in which they are all $\e = n\cdot\lambda$.

The influence of $v^\star$ in the first case is at most $n(1 - (1-\lambda)^n)$. For sufficiently large $n$:
$$(1-\lambda)^n  = \left(1-\frac{n\lambda}{n}\right)^n \approx e^{-n\lambda} \approx 1 - n\lambda$$

So, the influence is at most $n^2\lambda = n\e$. Once the probabilities on edges connecting it to the cycle increase from $\lambda$ to $\e$ its expected influence becomes at least $n(1 - (1-\e)^n)(1-\lambda)^n$. Using the same approximation as before for sufficient large $n$, we get that the influence of $v^\star$ is at least $n^2\e(1 - n\lambda) = n^2\e(1 - \e) = n^2\e - n^2\e^2$.

Then, $|f_\textbf{p}(v^\star) - f_{\textbf{p'}}(v^\star)| = n^2\e  - n^2\e^2 - n\e$. After setting $\e = \frac{1}{n}$, this bound becomes $n^2\e - 2n\e = (n\frac{m}{2} - 2n)\e$. Thus, for small $\e$ the Lipschitz constant is asymptotically achieved with $n$.

Notice that this example can be simplified if we use probabilities of 1 in the cycle, and 0 and $\e$ in the the connections of $v^\star$. The reason why we avoided the values $0, 1$ is because for some generalized linear models, e.g. in the logistic or the probit model the values of the probabilities are strictly in $(0,1)$ instead of $[0,1]$.

\subsection{Proof of Lemma~\ref{lemma:reduction}}

%Let $d$ denote the dimension of $\Theta$. From Corollary~\ref{cor:cover} sampling $l = \tilde{\Q}\left(d\left(\frac{LBd}{\epsilon}\right)^d\log\frac{1}{\delta}\right)$ functions uniformly at random from $\mathcal{F}$ guarantees that for any $f_\q \in \mathcal{F}$ there exists a sampled function $f_j \in \F_\e$ such that: $|f_\q(S) - f_j(S)| \leq \e$, with probability at least $1 - \delta$ for any $\e, \delta > 0$. Hence:

In Definition~\ref{def:cover} we defined an $\e$-cover of $\F$ as a set $\F_\e \subset F$ s.t. for any $f_\q \in \mathcal{F}$ there exists a function $f_j \in \F_\e$ such that: $|f_\q(S) - f_j(S)| \leq \e$ for all $S\subseteq V$. Using this definition we can proceed as follows:

\vspace{-.2cm}
$$\forall S\subseteq V,\, \forall f_\q \in \mathcal{F}, \,\exists f_j \in \F_\e: |f_{\q}(S) - f_{j}(S)| \leq \e$$
$$\Rightarrow -\min_{f_\q \in\mathcal{F}} f_\q(S) + f_{j}(S) \leq \e$$
$$\Rightarrow \min_{f_\q \in \mathcal{F}} f_\q(S) \geq \min_{f_i \in F}f_{i}(S) - \e$$

%The first inequality holds because of Lipschitzness and with high probbility because of sampling, which the second inequality holds because $\Theta_\e \subseteq \Theta$ and the minimum in a superset is smaller than any other element in the subset.

Simultaneously it holds that:
\vspace{-.2cm}
$$\forall S\subseteq V,\, \min_{f_i \in \F_\e}f_{i}(S) - \min_{f_\q \in \mathcal{F}}f_{\q}(S) \geq 0$$

\vspace{-.1cm}
since $\F_\e \subset \mathcal{F}$. Let $S^* = \arg\max_{S: |S| \leq k}\min_{f_\q \in \mathcal{F}}f_{\q}(x)$ and $S^*_\e = \arg\max_{S: |S| \leq k}\min_{f_i \in \F_\e}f_{i}(x)$. Then it is:

$$\min_{f_i \in \F_\e}f_{i}(S^*_\e) - \min_{f_\q \in \mathcal{F}}f_{\q}(S^*) \geq$$
$$\min_{f_i \in \F_\e}f_{i}(S^*) - \min_{f_\q \in \mathcal{F}}f_{\q}(S^*) \geq 0$$
$$\Rightarrow \max_{S: |S| \leq k}\min_{f_i \in \F_\e}f_{i}(S) \geq \max_{S: |S| \leq k}\min_{f_\q \in \mathcal{F}}f_{\q}(S)$$

Hence, utilizing an algorithm that guarantees an $\hat{S} \subseteq V$ such that:
$$\min_{f_i\in \F_\e}f_i(\hat{S}) \geq \alpha\cdot \max_{S: |S|\leq k}\min_{f_i\in \F_\e}f_i(S)$$
we get that for the family $\mathcal{F}$ it holds:
$$\min_{f_\q \in \mathcal{F}}f(\hat{S}) \geq \alpha\cdot \max_{S: |S| \leq k}\min_{f_\q \in \mathcal{F}}f(S) - \e.$$\qed

\section{Lower Bound}
\label{app:lower_bound}

We build a similar reduction to the one in \cite{kempe}, reducing from is GAP SET COVER.

%Take an instance of set cover and transform it into an instance of RIM. \tdcomment{add hyperparameters}. 

In a SET COVER instance we have a universe elements $U = \{u_1, u_2, \ldots, u_\ell\}$ and a collection of subsets of $U$, $\mathcal{T} = \{T_1, T_2, \ldots, T_M\}$, where $T_i \subseteq U$ for all $i \in [M]$. The goal is to find a cover $C \subseteq \mathcal{T}$ such that $\cup_{T \in C}T = U$ and the size of $C$ is minimized. In the decision version of the problem we are also given an integer $k$ and we are asked whether the optimal solution has value $|C| \leq k$ or $|C| > k$. The GAP SET COVER is a slightly stronger problem that asks whether there is a solution $C$ such that  $|C| \leq k$ or $|C| > (1 -\delta)\log Nk$, for any $\delta \in (0,1)$. We will assume that $k \leq \min\{M, \ell\}$, otherwise we can always find the optimal solution by simply picking all the elements of $\mathcal{T}$ or at least one set per element of $U$ that contains it (assuming that a set cover exists, such a set always exists as well). Both problems are $NP$-hard as proved in \cite{set_cover} and \cite{dinur}.

For any given instance of GAP SET COVER we construct an instance of robust influence maximization (RIM) by constructing a graph on $n$ nodes, and $\ell$ different influence functions. The goal is to maximize the influence with respect to the worse influence function. Each influence function is associated with a different set of diffusion probabilities. We will prove that if we can find a seed set that is a better than $\frac{1}{n^{1-\e}}$-approximation to the maximin solution of this RIM problem, then we can solve gap set cover.

\begin{figure*}[h!t!]
\begin{center}
\begin{tabular} {cccc}
  \includegraphics[width=1.45in, trim={0 0 1.64cm 0}, clip]{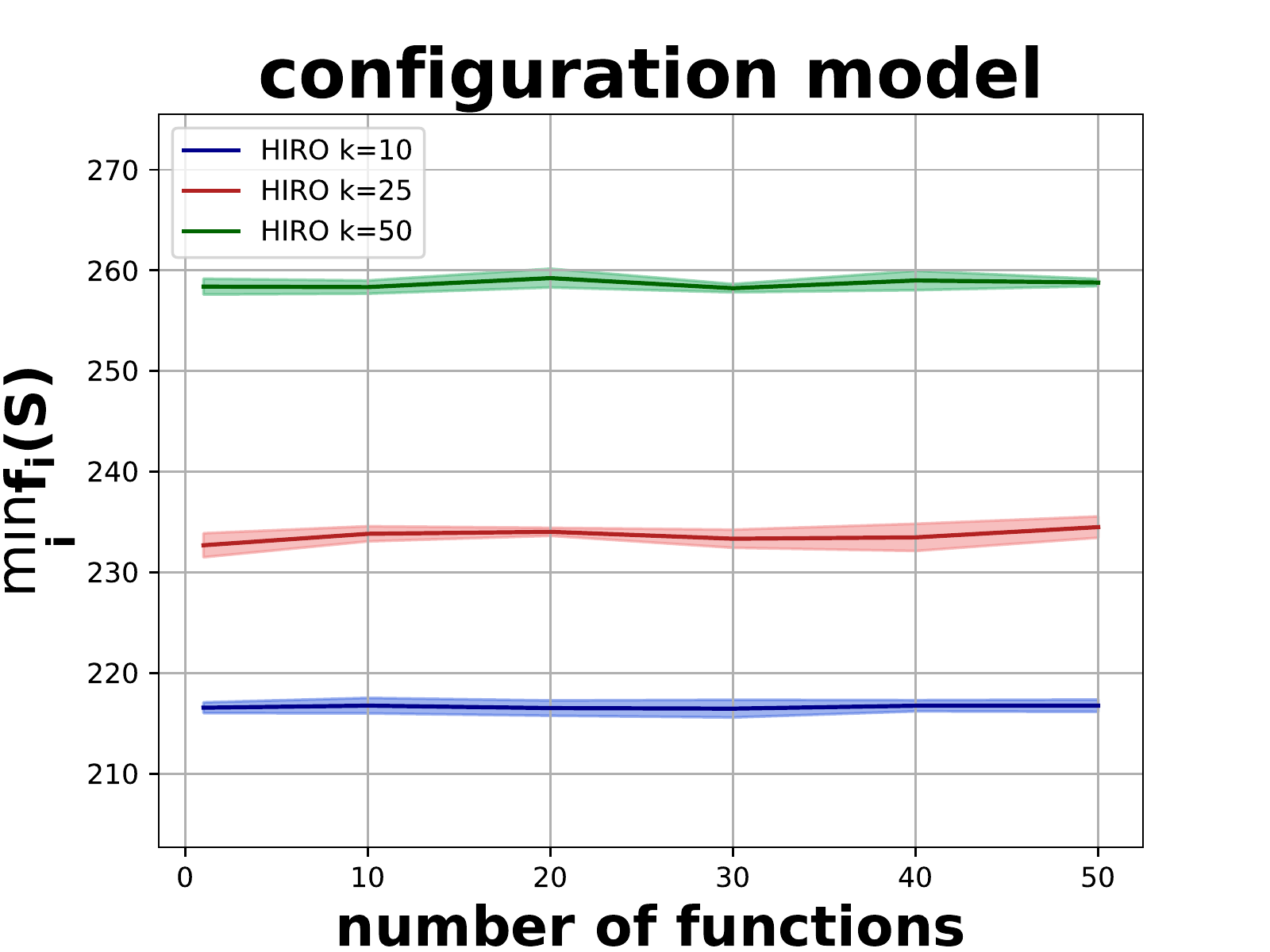} & 
  \includegraphics[width=1.45in, trim={0 0 1.64cm 0}, clip]{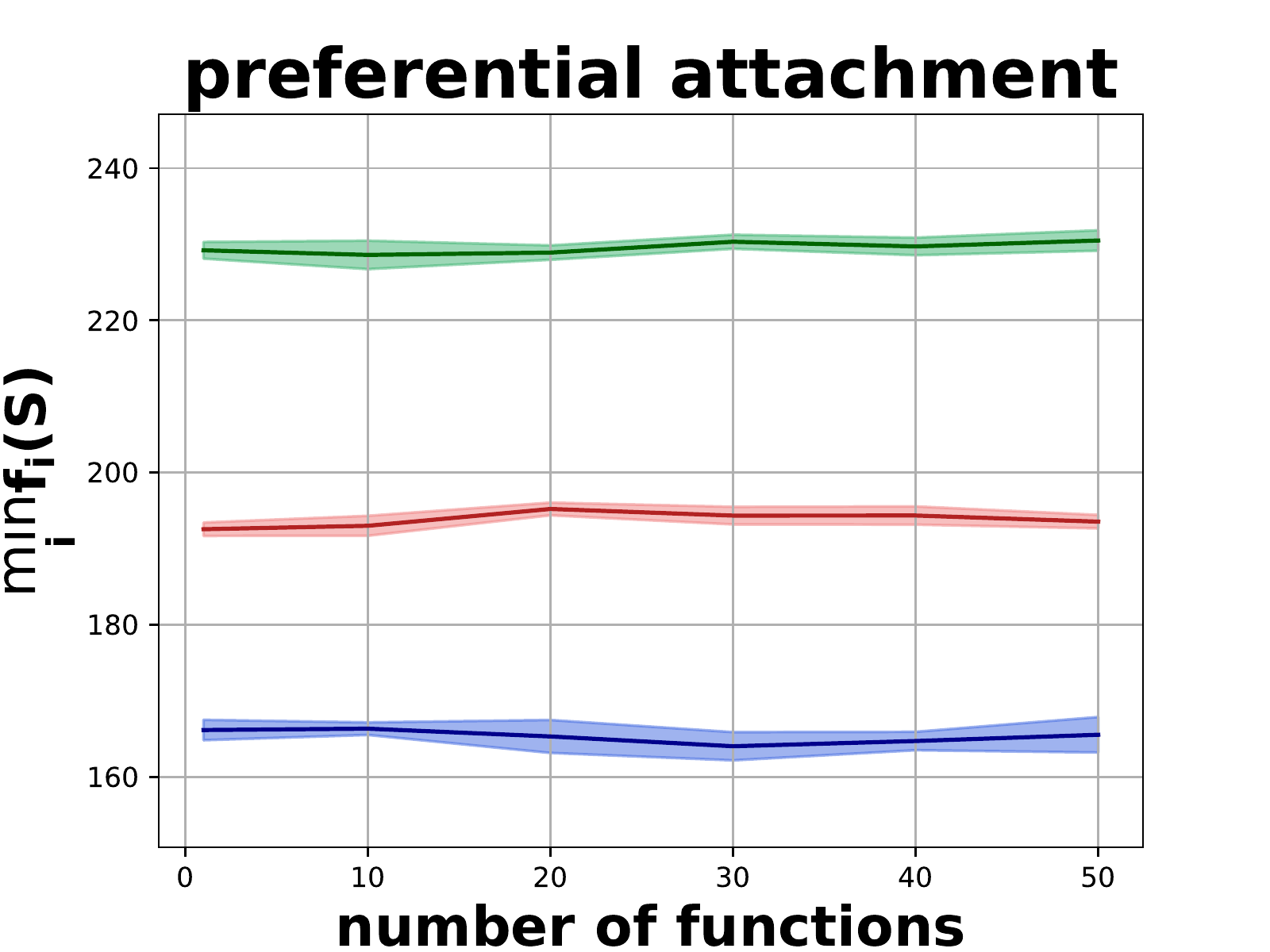} &
  \includegraphics[width=1.45in, trim={0 0 1.64cm 0}, clip]{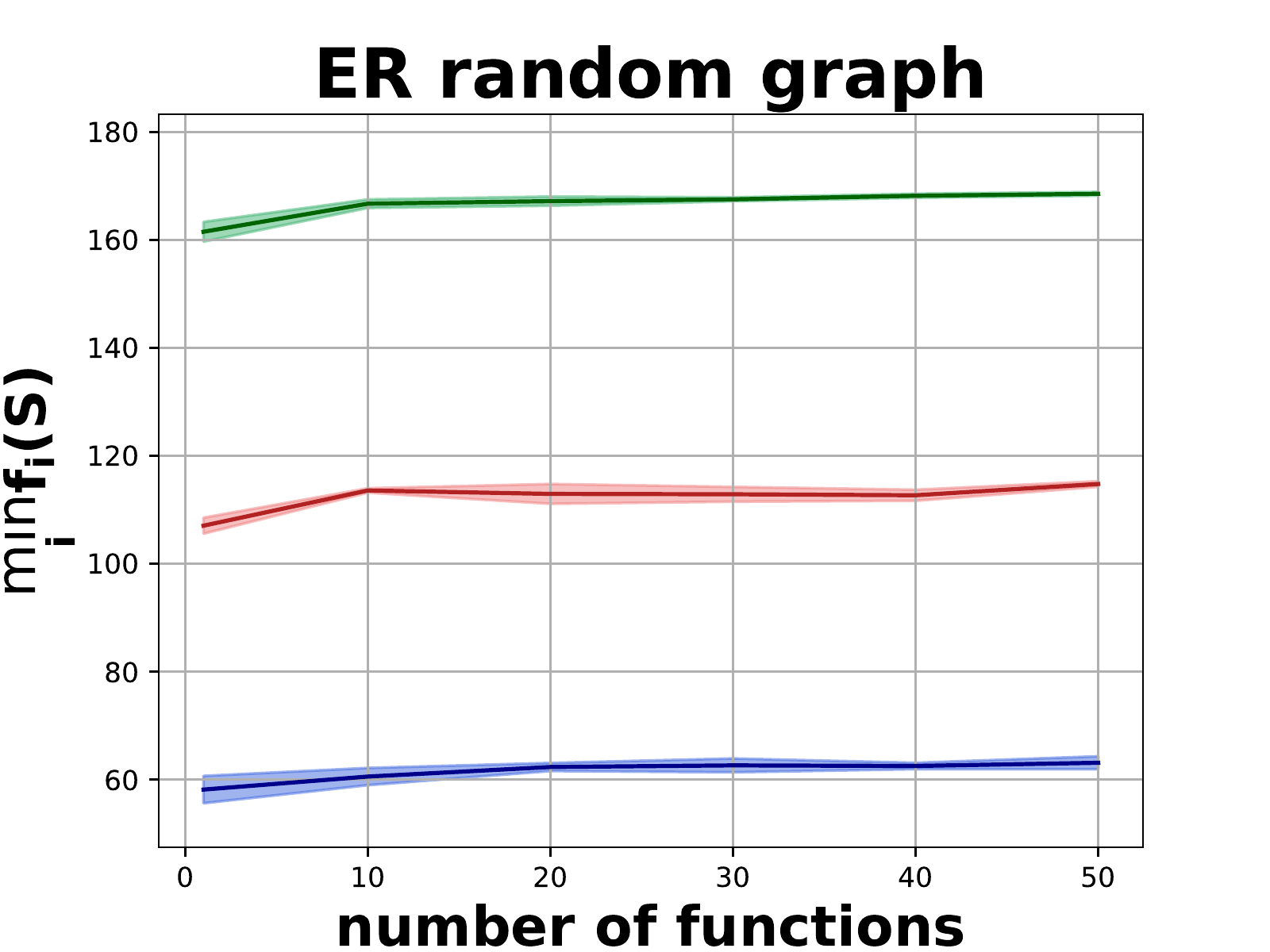} & 
  \includegraphics[width=1.45in, trim={0 0 1.64cm 0}, clip]{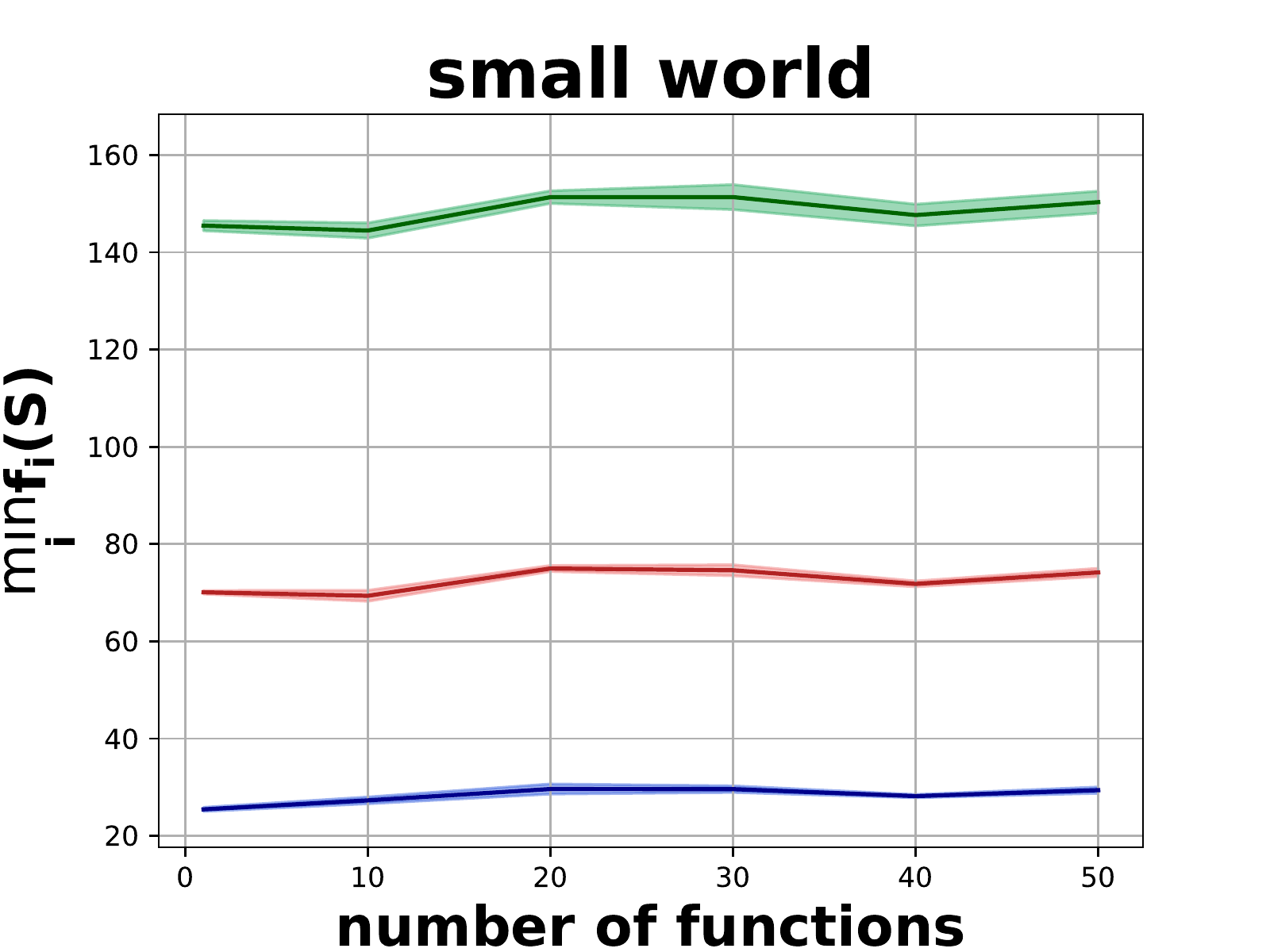} \\
  \end{tabular}
\end{center}
  \caption{Number of functions needed to cover the Hyperparameter's space.}
  \label{fig:ex1}
\end{figure*}

\begin{figure*}[h!t!]


\begin{center}
\begin{tabular} {cccc}
  \includegraphics[width=1.45in, trim={0 0 1.64cm 0}, clip]{mwu-convergence-config.pdf} & 
  \includegraphics[width=1.45in, trim={0 0 1.64cm 0}, clip]{mwu-convergence-ba.pdf} &
  \includegraphics[width=1.45in, trim={0 0 1.64cm 0}, clip]{mwu-convergence-erdos.pdf} & 
  \includegraphics[width=1.45in, trim={0 0 1.64cm 0}, clip]{mwu-convergence-sw.pdf} \\
  \end{tabular}
\end{center}
  \caption{Speed of convergence of HIRO for multiple seed set sizes.}
  \label{fig:ex2}
\end{figure*}

We construct the following bipartite graph with vertex set $V = A\cup B$. The set $A$ contains exactly $M$ nodes, one node $a_T$ for each $T$, $i \in [M]$. The set $B$ contains $m$ nodes ($m$ to be fixed later in the proof) for each element $u \in U$: $\{b_{u1}, b_{u2}, \ldots, b_{um}\}$, so $m\ell$ nodes in total. The total number of nodes in the graph is $n = M + m\ell$. 

We create the edges of the graph according to the the set cover solution $C$. For every $T \in C$ we add the directed edges from $a_T$ to $\{b_{u1}, b_{u2}, \ldots, b_{um}\}$ for all $u \in T$. That is $m|T|$ edges per element $T\in C$.

Each influence function induces different probabilities on the edges. We have $\ell$ functions. For the $u^{th}$ function, set the probability of the edges $\{(a_T, b_{u1}), (a_T, b_{u2}), \ldots, (a_T, b_{um})\}$ for which $u \in T$ to $1-\lambda$ and the probability of the rest of the edges to $\lambda$.

There are two cases: $|C| \leq k$ and $|C| > (1-\delta)\ln\ell k$. Let us focus on the case where $|C| \leq k$ first. One can easily see that if we choose the $a_T$s for which $T \in C$ as seeds, we can achieve expected diffusion of at least $|C| + (1-\lambda)m$ on each of the $\ell$ influence functions due to the fact that every $j \in [\ell]$, $u_j$ is covered by the solution $C$, i.e. there exists $T$ such that $u_j \in T$. Thus, in this case $\max_S\min_{j \in [\ell]}f_j(S) \geq |C| + (1-\lambda)m$.

In the second case, there is no cover of size at most $(1-\delta)\ln\ell k$. However, we are allowed to choose at most $(1-\delta)\ln\ell k$ as seeds. Hence, for any choice of seeds there is definitely an element $u_j \in U$ that is not covered. As a result, for the $j^{th}$ influence function the expected number of influenced nodes is at most $(1-\delta)\ln\ell k \cdot(1 + m(\ell-1)\lambda)$. That is because each node is connected to at most $m(\ell-1)\lambda$ other nodes (since it is definitely not connected to $u_j$) and there are no high probability edges that are triggered in this function. As a result $\max_S\min_{j \in [\ell]}f_j(S) \leq (1-\delta)\ln\ell k \cdot(1 + m(\ell-1)\lambda)$.

We want to be able to distinguish between the first and the second case, i.e. the case where $\max_S\min_{j \in [\ell]}f_j(S) \geq |C| + (1-\lambda)m$ and when $\max_S\min_{j \in [\ell]}f_j(S) \leq (1-\delta)\ln\ell k \cdot(1 + m(\ell-1)\lambda)$. To this end, we consider the ratio: $\frac{(1-\delta)\ln\ell k(1+m(\ell-1)\lambda)}{|C| + (1-\lambda)m}$ which we want to prove that  is less than $\frac{1}{n^{1-\e}}$ for any $\e >0$. Remember that for the number of nodes in the graph it holds $n = M + m\ell$.

Hence, if $\frac{ |C| + (1-\lambda)m}{n^{1-\e}}>(1-\delta)\ln\ell k(1+m(\ell-1)\lambda)$ we will be able to separate the two cases.

First assume $\min\{M,\ell\}=\ell$:
$$\frac{ |C| + (1-\lambda)m}{(M + m\ell)^{1-\e}} \geq (1-\lambda)\frac{1+m}{((1+m)M)^{1-\e}}$$
$$ \geq (1-\lambda)\frac{m^\e}{M^{1-\e}}$$%\approx\frac{m^\e}{M^{1-\e}}$$

Now for $m=M^{3/\epsilon}, \lambda=1/m$ we have that as $M$ grows
$$\left(1-\frac{1}{m}\right)\frac{m^{\epsilon}}{M^{1-\epsilon}} > (1-\delta)M^{2+\epsilon}>$$
$$ (1-\delta)M^2\ln M$$
 which is asymptotically larger than $(1-\delta)\ln\ell k(1+m(\ell-1)\lambda)$. Choosing $m = \ell^{3/\e}$ solves the other case. Hence setting $m = (\max\{M, \ell\})^{3/\e}$ and $\lambda=1/m$ completes the reduction. Hence, if we have a better than $\frac{1}{n^{1-\e}}$-approximation algorithm for robust influence maximization then we can solve gap set cover.

\section{Omitted Details from Experiments}
\label{app:experiments}

\textbf{Synthetic Graphs:} As we discussed in Section~\ref{sec:experiments} different graph models yield graphs with different topological properties. The ones we selected for our experiments are the following: 
{\leftmargini=2ex
\begin{itemize*}

\item \emph{Small-World network:} In this model most nodes are not neighbors of one another, but the path from each node to another is short. Specifically we use the Watts-€"Strogatz model that is known for its high clustering coefficient and small diameter properties. Each node is connected to 5 nodes in the ring topology and the probability of rewiring an edge is $1/n$ where $n$ is the number of nodes in the graph. We work with graphs of sizes 100-250.

\item \emph{Preferential Attachment (Barab\'{a}si-Albert):} The degree distribution of this model is a power law and hence captures interesting properties of the real-world social networks. We took 2 initial vertices and added 2 edges at each step, using the preferential attachment model, until we reached 100-250 vertices.

\item \emph{Configuration model:} The configuration model allows
us to construct a graph with a given degree distribution. We chose 100-250 vertices and a power-law degree distribution with parameter $\alpha = 2$.

\item \emph{Erd\"{o}s-R\'enyi:} We used the celebrated $G(n,m)$ model to create a graph with 100-250 vertices and edges with probability $p=3/n$. $G(n,m)$ does not capture some of the properties of real social networks, however it is a very impactful model with variety of applications in several areas of science.
\end{itemize*}

\end{document}